  \theoremstyle{definition}
  \newtheorem{defn}{\protect\definitionname}
  \theoremstyle{plain}
  \newtheorem{lem}{\protect\lemmaname}
 \theoremstyle{definition}
  \newtheorem{example}{\protect\examplename}
  \theoremstyle{definition}
  \newtheorem*{example*}{\protect\examplename}
  \theoremstyle{remark}
  \newtheorem{claim}{\protect\claimname}
  \theoremstyle{plain}
  \newtheorem{cor}{\protect\corollaryname}
  \theoremstyle{plain}
  \newtheorem*{lem*}{\protect\lemmaname}
  \theoremstyle{remark}
  \newtheorem*{claim*}{\protect\claimname}
\title{Lifted Representation of Relational Causal Models Revisited: \\ Implications for Reasoning and Structure Learning}
\author{ {\bf Sanghack Lee \and Vasant Honavar} \\
Artificial Intelligence Research Laboratory \\
College of Information Sciences and Technology \\
Pennsylvania State University\\
University Park, PA 16802 \\
}
\newenvironment{proof-sketch}{\noindent{\emph{Proof sketch.}}\hspace*{1em}}{\qed\bigskip\\}
\newcommand\Perp{\protect\mathpalette{\protect\independenT}{\perp}} \def\independenT#1#2{\mathrel{\rlap{$#1#2$}\mkern3mu{#1#2}}}
\algnewcommand{\IIf}[1]{\State\algorithmicfor\  #1\ \algorithmicdo\ }
\algnewcommand{\EndIIf}{\unskip\ }
  \providecommand{\claimname}{Claim}
  \providecommand{\definitionname}{Definition}
  \providecommand{\examplename}{Example}
  \providecommand{\lemmaname}{Lemma}
\providecommand{\corollaryname}{Corollary}
\begin{document}
\maketitle
\begin{abstract}
\citet{maier2010rpc} introduced\emph{ the relational causal model}
(RCM) for representing and inferring causal relationships in relational
data. A lifted representation, called \emph{abstract ground graph}
(AGG), plays a central role in reasoning with and learning of RCM.
The correctness of the algorithm proposed by \citet{maier2013rcd}
for learning RCM from data relies on the \emph{soundness and completeness}
of AGG for \emph{relational d-separation} to reduce the learning of
an RCM to learning of an AGG. We revisit the definition of AGG and
show that AGG, as defined in \citet{maier2013rds-workshop}, does
\emph{not} correctly abstract all ground graphs. We revise the definition
of AGG to ensure that it correctly abstracts all ground graphs. We
further show that AGG representation is \emph{not complete} for relational
\emph{d}-separation\emph{,} that is, there can exist conditional independence
relations in an RCM that are not entailed by AGG. A careful examination
of the relationship between the lack of completeness of AGG for relational
\emph{d}-separation and\emph{ faithfulness} conditions suggests that
weaker notions of completeness, namely \emph{adjacency faithfulness
}and \emph{orientation faithfulness} between an RCM and its AGG, can
be used to learn an RCM from data.
\end{abstract}

\section{INTRODUCTION}

Discovery of causal relationships from observational and experimental
data is a central problem with applications across multiple areas
of scientific endeavor. There has been considerable progress over
the past decades on algorithms for eliciting causal relationships
from data under a broad range of assumptions \citep{pearl2000causality,spirtes2000causation,shimizu2006linear}.
Most algorithms for causal discovery assume propositional data where
instances are independent and identically distributed. However, in
many real world applications, these assumptions are violated because
the underlying data has a relational structure of the sort that is
modeled in practice by an entity-relationship model \citep{chen1976entity}.
There has been considerable work on learning predictive models from
relational data \citep{getoor2007introduction}. Furthermore, researchers
from different disciplines have studied causal relationships and resulting
phenomena on relational world, e.g., peer effects \citep{sacerdote2000peer,ogburn2014interference},
social contagion \citep{christakis2007spread,shalizi2011}, viral
marketing \citep{leskovec2007dynamics}, and information diffusion
\citep{gruhl2004information}.

Motivated by the limitations of traditional approaches to learning
causal relationships from relational data, Maier and his colleagues
introduced the relational causal model (RCM) \citep{maier2010rpc}
and provided a sound and complete causal structure learning algorithm,
called the relational causal discovery (RCD) algorithm \citep{maier2013rcd},
for inferring causal relationships from relational data. The key idea
behind RCM is that a cause and its effects are in a direct or indirect
relationship that is reflected in the relational data. Traditional
approaches for reasoning on and learning of a causal model cannot
be trivially applied for relational causal model \citep{maier2013rcd}.
Reasoning on an RCM to infer a relational version of conditional independence
(CI) makes use of a lifted representation, called \emph{abstract ground
graphs} (AGGs), in which traditional graphical criteria can be used
to answer relational CI queries. The lifted representation is employed
as an internal learning structure in RCD to reflect the inferred CI
results among relational version of variables. RCD makes use of a
new orientation rule designed specifically for RCM.

\paragraph{Motivation and Contributions}

RCM \citep{maier2010rpc} offer an attractive model for representing,
reasoning about, and learning causal relationships implicit in relational
data. \citet{arbour2014psm} proposed a relational version of propensity
score matching method to infer (relational) causal effects from observational
data. \citet{marazopoulou2015trcm} extended RCM to cope with \emph{temporal}
relational data. They generalized both RCM and RCD to Temporal RCM
and Temporal RCD, respectively. A lifted representation, called \emph{abstract
ground graph} (AGG), plays a central role in reasoning with and learning
of RCM. The correctness of the algorithms proposed by \citet{maier2013rcd}
for learning RCM and \citet{marazopoulou2015trcm} for Temporal RCM,
respectively, from observational data rely on the \emph{soundness
and completeness} of AGG for \emph{relational d-separation} to reduce
the learning of an RCM to learning of an AGG. The main contributions
of this paper are as follows: (i) We show that AGG, as defined in
\citet{maier2013rds-workshop} does \emph{not} correctly abstract
all ground graphs; (ii) We revise the definition of AGG to ensure
that it correctly abstracts all ground graphs; (iii) We further show
that AGG representation is \emph{not complete} for relational \emph{d}-separation\emph{,}
that is, there can exist conditional independence relations in an
RCM that are not entailed by AGG; and (iv) Based on a careful examination
of the relationship between the lack of completeness of AGG for relational
\emph{d}-separation and\emph{ faithfulness} conditions suggests that
weaker notions of completeness, namely \emph{adjacency faithfulness
}and \emph{orientation faithfulness} between an RCM and its AGG, can
be used to learn an RCM from data.

\section{\label{sec:PRELIMINARIES}PRELIMINARIES}

We follow notational conventions introduced in \citep{maier2013rcd,maier2013rds-workshop,maier2014thesis}.
An entity-relationship model \citep{chen1976entity} abstracts the
\emph{entities }(e.g., \emph{employee}, \emph{product}) and \emph{relationships
}(e.g., \emph{develops}) between entities in a domain using a \emph{relational
schema}. The instantiation of the schema is called a \emph{skeleton}
where entities form a network of relationships (e.g., \emph{Quinn-develops-Laptop},
\emph{Roger-develops-Laptop}). Entities and relationships have attributes
(e.g., \emph{salary} of employees, \emph{success} of products). \emph{Cardinality}
\emph{constraints} specify the cardinality of relationships that an
entity can participate in (e.g., \emph{many} employees \emph{can}
develop a product.).\footnote{The examples are taken from \citet{maier2014thesis}.}
The following definitions are taken from \citet{maier2014thesis}:
\begin{defn}
A \emph{relational schema} $\mathcal{S}$ is a tuple $\left\langle \mathcal{E},\mathcal{R},\mathcal{A},\mathsf{card}\right\rangle $:
a set of entity classes $\mathcal{E}$; a set of relationship classes
$\mathcal{R}$ where $R_{i}=\langle E_{j}^{i}\rangle_{j=1}^{n}$ and
$n=\left|R_{i}\right|$ is arity for $R_{i}$; attribute classes $\mathcal{A}$
where $\mathcal{A}\left(I\right)$ is a set of attribute classes of
$I\in\mathcal{E}\cup\mathcal{R}$; and cardinalities $\mathsf{card}:\mathcal{R}\times\mathcal{E}\!\rightarrow\!\left\{ \mathsf{one},\mathsf{many}\right\} $.
\end{defn}
Every relationship class $R_{i}$ have two or more distinct entity
classes.\footnote{In general, the same entity class can participate in a relationship
class in two or more different roles. For simplicity, we only consider
relationship classes only with distinct entity classes. } We denote by $\mathcal{I}$ all item classes $\mathcal{E}\cup\mathcal{R}$.
We denote by $I_{X}$ an item class that has an attribute class $X$
assuming, without loss of generality, that the attributes of different
item classes are disjoint. Participation of an entity class $E_{j}$
in a relationship class $R_{i}$ is denoted by $E_{j}\in R_{i}$ if
$\exists_{k=1}^{\left|R_{i}\right|}E_{k}^{i}=E_{j}$.

\begin{defn}
\label{dfn:rcm:RS-relational-skeleton}A \emph{relational skeleton}
$\sigma$ is an instantiation of relational schema $\mathcal{S}$,
represented by a graph of entities and relationships. Let $\sigma\left(I\right)$
denote a set of items of item class $I\in\mathcal{I}$ in $\sigma$.
Let $i_{j},i_{k}\in\sigma$ such that $i_{j}\in\sigma(I_{j})$, $i_{k}\in\sigma(I_{k})$,
and $I_{j},I_{k}\in\mathcal{I}$, then we denote $i_{j}\sim i_{k}$
if there exists an edge between $i_{j}$ and $i_{k}$ in $\sigma$.
\end{defn}

\subsection{\label{sub:RELATIONAL-CAUSAL-MODEL}RELATIONAL CAUSAL MODEL}

\emph{Relational causal model} (RCM, \citealp{maier2010rpc}) is a
causal model where causes and their effects are \emph{related} given
an underlying relational schema. For example, the success of a product
depends on the competence of employees who develop the product (see
\prettyref{fig:rcm_example}). 
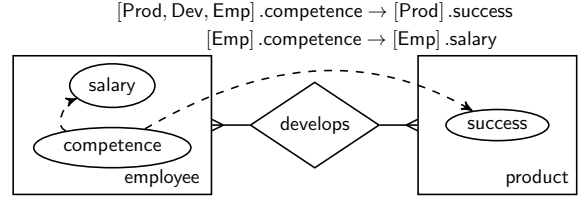
\begin{figure}
\center\begin{tikzpicture}[scale=0.75, every node/.style={scale=0.75},->,>=stealth',auto,semithick]
\tikzset{rel state/.style={draw,diamond,aspect=1.5}}
\tikzset{entity state/.style={draw,rectangle}}
\tikzset{attr state/.style={draw,ellipse}}
\tikzstyle{every state}=[rectangle,fill=white,draw=none,text=black]

\node[entity state,minimum width=10em, minimum height=7em] (A) [label={[xshift=2.5em,yshift=-7em]$\mathsf{employee}$}] {$$};
\node[rel state]  (B) [right=0.5cm of A] {$\mathsf{develops}$};
\node[entity state,minimum width=8em, minimum height=7em] (C) [right=0.5cm of B,label={[xshift=2em,yshift=-7em]$\mathsf{product}$}] {};
\node[attr state] (D) at ($(A)+ (0,0.7)$) {$\mathsf{salary}$};
\node[attr state] (E) at ($(A)+(0,-0.4)$) {$\mathsf{competence}$};
\node[attr state] (F) at ($(C)+(0,0)$) {$\mathsf{success}$};
\node             (G) at ($(B)+(0,2)$) {$\left[\mathsf{Prod},\mathsf{Dev},\mathsf{Emp}\right].\mathsf{competence}\rightarrow\left[\mathsf{Prod}\right].\mathsf{success}$};
\node             (H) at ($(B)+(0.65,1.5)$) {$\left[\mathsf{Emp}\right].\mathsf{competence}\rightarrow\left[\mathsf{Emp}\right].\mathsf{salary}$};

\draw[->,dashed]	(E) to[out=30,in=150]  (F) ;
\draw[->,dashed]	(E) to[out=160,in=200]  (D) ;
\draw[-]   ($(B.east)!0.7!(C.west)$) to ($(C.west)+(0,1mm)$);
\draw[-]   ($(B.east)!0.7!(C.west)$) to ($(C.west)+(0,-1mm)$);
\draw[-]   ($(B.west)!0.7!(A.east)$) to ($(A.east)+(0,1mm)$);
\draw[-]   ($(B.west)!0.7!(A.east)$) to ($(A.east)+(0,-1mm)$);
\draw[-]	(A) to (B);
\draw[-]	(B) to (C);

\end{tikzpicture}\protect\caption{\label{fig:rcm_example}A toy example of RCM adopted from \citet{maier2014thesis}
with two relational dependencies: (i) the success of a product depends
on the competence of employees who develop it; (ii) employee's salary
is affected by his/her competence.}
\end{figure}
 An RCM models \emph{relational dependencies}; each relational dependency
has a cause and its effect, which are represented by \emph{relational
variable}s; a relational variable is a pair consisting of a \emph{relational
path} and an attribute. 
\begin{defn}
\label{dfn:rcm:RP-relational-path}A \emph{relational path} $P=\left[I_{j},\dotsc,I_{k}\right]$
is an alternating sequence of entity class $E\in\mathcal{E}$ and
relationship class $R\in\mathcal{R}$. An item class $I_{j}$ is called
\emph{base} \emph{class} or \emph{perspective} and $I_{k}$ is called
a \emph{terminal} \emph{class}. A relational path should satisfy:
\begin{enumerate}
\item \itemsep0em  for every $\left[E,R\right]$ or $\left[R,E\right]$,
$E\in R$;
\item for every $\left[E,R,E^{\prime}\right]$, $E\neq E^{\prime}$; and
\item for every $\left[R,E,R^{\prime}\right]$, if $R=R^{\prime}$, then
$\mathsf{card}\left(R,E\right)=\mathsf{many}$.
\end{enumerate}
\end{defn}
All valid relational paths on the given schema $\mathcal{S}$ are
denoted by $\mathbf{P}_{\mathcal{S}}$. We denote the \emph{length}
of $P$ by $\left|P\right|$, a \emph{subpath} by $P^{i:j}=\left[P_{k}\right]_{k=i}^{j}$
or $P^{i:}=\left[P_{k}\right]_{k=i}^{\left|P\right|}$ for $1\leq i\leq j\leq\left|P\right|$,
and the \emph{reversed} \emph{path} by $\tilde{P}=[P_{\left|P\right|},\dotsc,P_{2},P_{1}]$.
Note that all subpaths of a relational path as well as the corresponding
reverse paths are valid. A\emph{ relational variable} $P.X$ is a
pair of a relational path $P$ and an attribute class $X$ for the
terminal class of $P$. A relational variable is said to be \emph{canonical}
if its relational path has a length equal to 1. A \emph{relational
dependency} is of the form $[I_{j},\dotsc,I_{k}].Y\!\rightarrow\![I_{j}].X$
such that its cause and effect share the same base class and its effect
is canonical\emph{.}

Given a relational schema $\mathcal{S}$, a \emph{relational (causal)
model} $\mathcal{M}_{\Theta}$ is a pair of a \emph{structure} $\mathcal{M}=\left\langle \mathcal{S},\mathbf{D}\right\rangle $,
where $\mathbf{D}$ is the set of \emph{relational dependencies,}
and $\Theta$ is a set of parameters. We assume acyclicity of the
model so that the attribute classes can be partially ordered based
on $\mathbf{D}$. The parameters $\Theta$ define conditional distributions,
$p([I].X|\mbox{Pa}([I].X))$, for each pair $(I,X)$ where $I\in\mathcal{I}$,
$X\in\mathcal{A}(I)$, and $\mathrm{\mbox{Pa}}([I].X)$ is a set of
causes of $[I].X$, i.e., $\{P.Y|P.Y\!\rightarrow\![I].X\in\mathbf{D}\}$.
This paper focuses on the structure of RCM. Hence we often omit parameters
$\Theta$ from $\mathcal{M}$.

\subparagraph{Terminal Set and Ground Graph}

Because a skeleton is an instantiation of an underlying schema, a
\emph{ground graph} is an instantiation of the underlying RCM given
a skeleton translating relational dependencies to every entity and
relationship in the skeleton. It is obtained by interpreting the dependencies
defined by the RCM on the skeleton using the \emph{terminal sets }of
each of the instances in the skeleton.

Given a relational skeleton $\sigma$, the \emph{terminal set} of
a relational path $P$ given a base $b\in\sigma(P_{1})$, denoted
by $P|_{b}$, is the set of terminal items reachable from $b$ when
we traverse the skeleton along $P$. Formally, a terminal set $P|_{b}$
is defined recursively, $P^{1:1}|_{b}=\{b\}$ and 
\[
P^{1:\ell}|_{b}=\{i\in\sigma(P_{\ell})\mid j\!\in\!P^{1:\ell-1}|_{b},\,i\!\sim\!j\}\setminus{\textstyle \bigcup_{1\leq k<\ell}}P^{1:k}|_{b}.
\]
This implies that $P^{1:\ell}|_{b}$ and $P|_{b}$ will be disjoint
for $1\leq\ell<\left|P\right|$. Restricting the traversals so as
not to revisit any previously visited items corresponds to the \emph{bridge
burning semantics }(hereinafter, BBS) \citep{maier2013rds-workshop}.
The instantiation of an RCM $\mathcal{M}$ for a skeleton\emph{ $\sigma$}
yields a ground graph which we denote by $GG_{\mathcal{M}\sigma}$.
The vertices of $GG_{\mathcal{M}\sigma}$ are labeled by pairs of
items and its attribute, $\{i.X\mid I\!\in\!\mathcal{I},\,i\!\in\!\sigma(I),\,X\!\in\!\mathcal{A}(I)\}$.
There exists an edge $i_{j}.X\!\rightarrow\!i_{k}.Y$ in $GG_{\mathcal{M}\sigma}$
such that $i_{j}\!\in\!\sigma(I_{j})$, $i_{k}\!\in\!\sigma(I_{k})$,
$Y\!\in\!\mathcal{A}(I_{k})$, and $X\!\in\!\mathcal{A}(I_{j})$ if
and only if there exists a dependency $P.X\!\rightarrow\![I_{k}].Y\!\in\!\mathbf{D}$
such that $i_{k}\!\in\!P|_{i_{j}}$. 

In essence, RCM models dependencies on relational domain as follows:
Causal relationships are described from the perspective of each item
class; and are interpreted for each items to determine its causes
in a skeleton yielding a ground graph. Since an RCM is defined on
a given schema, RCM is interpreted on a skeleton so that every ground
graph is an instantiation of the RCM. 

Throughout this paper, unless specified otherwise, we assume a relational
schema $\mathcal{S}$, a set of relational dependencies $\mathbf{D}$,
and an RCM $\mathcal{M}=\left\langle \mathcal{S},\mathbf{D}\right\rangle $.

\section{\label{sec:REASONING-ON-RCM}REASONING WITH AN RCM}

An RCM can be seen as a \emph{meta} causal model or a \emph{template}
whose instantiation, a ground graph, corresponds to a \emph{traditional}
causal model (e.g., a causal Bayesian network). Reasoning with causal
models relies on \emph{conditional independence} (CI) relations among
variables. Graphical criteria such as \emph{d}-separation \citep{pearl2000causality}
are often exploited to test CI given a model. Hence, the traditional
definitions and methods for reasoning with causal models need to be
``lifted'' to the relational setting in order to be applicable to
\emph{relational} causal models.
\begin{defn}[Relational \emph{d}-separation \citep{maier2014thesis}]
Let $\mathbf{U}$, $\mathbf{V}$, and $\mathbf{W}$ be three disjoint
sets of relational variables with the same perspective $B\in\mathcal{I}$
defined over relational schema $\mathcal{S}$. Then, for relational
model structure $\mathcal{M},$ $\mathbf{U}$ and $\mathbf{V}$ are
\emph{d}-separated by $\mathbf{W}$ if and only if, for all skeletons
$\sigma\in\Sigma_{\mathcal{S}}$, $\mathbf{U}|_{b}$ and $\mathbf{V}|_{b}$
are \emph{d}-separated by $\mathbf{W}|_{b}$ in ground graph $GG_{\mathcal{M}\sigma}$
for all $b\in\sigma\left(B\right)$. 
\end{defn}
There are two things implicit in this definition: (i) \emph{all-ground-graphs
semantics} which implies that \emph{d}-separation must be hold over
\emph{all} instantiations of the model; (ii) the terminal set items
of two \emph{different} relational variables may overlap (which we
refer to as\emph{ intersectability}). In other words, two relational
variables $U=P.X$ and $V=P^{\prime}.X$ of the same perspective $B$
and the same attribute, are said to be \emph{intersectable} if and
only if:
\begin{equation}
\exists_{\sigma\in\Sigma_{\mathcal{S}}}\exists_{b\in\sigma\left(B\right)}P|_{b}\cap P^{\prime}|_{b}\neq\emptyset.\label{eq:intersectability-formula}
\end{equation}
In order to allow testing of conditional independence on all ground
graphs, \citet{maier2013rcd} introduced an \emph{abstract ground
graph} (AGG), which abstracts \emph{all} \emph{ground graphs} and
is able to cope with the \emph{intersectability} of relational variables.
We first recapitulate the original definition of AGGs.

\subsection{\label{sub:ORIGINAL-ABSTRACT-GROUND}ORIGINAL ABSTRACT GROUND GRAPHS}

An abstract ground graph $AGG_{\mathcal{M}B}$ is defined for a given
relational model $\mathcal{M}$ and a perspective $B\in\mathcal{I}$
\citep{maier2013rcd}, Since we fix the model, we omit the subscript
$\mathcal{M}$ and denote the abstract ground graph for perspective
$B$ by $AGG_{B}$. The resulting graph consists of two types of vertices:
$\mathbf{RV}_{B}$ and $\mathbf{IV}_{B}$; and two types of edges:
$\mathbf{RVE}_{B}$ and $\mathbf{IVE}_{B}$.

\begin{figure}
\centering\footnotesize\begin{tikzpicture}[scale=0.8, every node/.style={scale=0.8},every label/.style={yshift=-1.5em}, ->,>=stealth',shorten >=1pt,auto,node distance=3cm,semithick]
\tikzset{ent state/.style={semithick,draw,rectangle,aspect=1,fill=white,text=black,minimum width=1.7em, minimum height=1.7em}}
\tikzset{rel state/.style={draw,semithick,diamond,aspect=1,fill=white,text=black,minimum width=1.7em, minimum height=1.7em}}
\tikzstyle{every state}=[rectangle,fill=white,draw=none,text=black]

\node[ent state] (P1E1) at ($(0,0)$) [label={$E_3$}]{};
\node[rel state] (P1D1) at ($(P1E1)+(-0.7,0)$) [label={$R_b$}]{};
\node[ent state] (P1C1) at ($(P1D1)+(-0.7,0)$) [label={$E_2$}]{};
\node[rel state] (P1B1) at ($(P1C1)+(-0.7,0)$) [label={$R_a$}]{};
\node[ent state] (P1A1) at ($(P1B1)+(-0.7,0)$) [label={$E_1$}]{};
\node[rel state] (P1B2) at ($(P1E1)+(0.7,0)$) [label={$R_b$}]{};
\node[ent state] (P1C2) at ($(P1B2)+(0.7,0)$) [label={$E_2$}]{};
\node[rel state] (P1D2) at ($(P1C2)+(0.7,0)$) [label={$R_c$}]{};
\node[ent state] (P1E2) at ($(P1D2)+(0.7,0)$) [label={$E_4$}]{};
\draw[->] ($(P1A1.west)+(0,0.4)$) -- ($(P1E2.east)+(0,0.4)$) node[midway,above] {$P\Join_{1}Q$};
\draw[->] ($(P1A1.west)+(0,-0.4)$) -- ($(P1E1.east)+(0,-0.4)$) node[midway,below] {$P$};
\draw[->] ($(P1E1.west)+(0,-0.6)$) -- ($(P1E2.east)+(0,-0.6)$) node[midway,below] {$Q$};

\node[ent state] (P3C1) at ($(P1E1)+(5.5,0.5)$) [label={$E_2$}]{};
\node[rel state] (P3B1) at ($(P3C1)+(-0.7,0)$) [label={$R_a$}]{};
\node[ent state] (P3A1) at ($(P3B1)+(-0.7,0)$) [label={$E_1$}]{};
\node[rel state] (P3D2) at ($(P3C1)+(0.7,0)$) [label={$R_c$}]{};
\node[ent state] (P3E2) at ($(P3D2)+(0.7,0)$) [label={$E_4$}]{};
\node[rel state] (P3D1) at ($(P3C1)+(0,-0.7)$) [label={$R_b$}]{};
\node[ent state] (P3E1) at ($(P3D1)+(0,-0.7)$) [label={$E_3$}]{};
\draw[->] ($(P3A1.west)+(0,0.4)$) -- ($(P3E2.east)+(0,0.4)$) node[midway,above] {$P\Join_{3}Q$};
\draw[->] ($(P3A1.west)+(0,-0.4)$) to [bend left=45, looseness=2] node[midway,below left] {$P$} ($(P3E1.south)+(-0.4,0)$);
\draw[->] ($(P3E1.south)+(0.4,0)$) to [bend left=45, looseness=2] node[midway,below right] {$Q$} ($(P3E2.east)+(0,-0.4)$);

\end{tikzpicture}

\protect\caption{\label{fig:extend-method}A schematic example of how $\mathsf{extend}$
is computed showing two relational paths in $P\protect\Join Q$ where
$\ensuremath{\mathsf{card}(R_{b},E_{3})=\mathsf{many}}$. If $\ensuremath{\mathsf{card}(R_{b},E_{3})}$
is $\mathsf{one}$, then $P\protect\Join_{1}Q$ is not valid due to
rule 3 of relational path. A path $P\protect\Join_{2}Q$ is invalid
due to the violation of rule 2, i.e., $[\dotsc,E_{2},R_{b},E_{2},\dotsc]$.}
\end{figure}
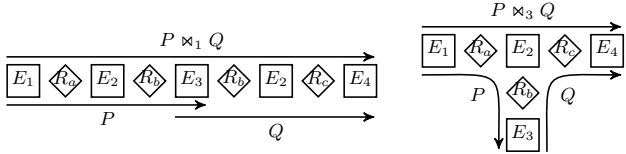
We denote by $\mathbf{RV}_{B}$ the set of \emph{all }relational variables
(RV) whose paths originate in $B$. We denote by $\mathbf{RVE}_{B}$
the set of all edges between the relational variables in $\mathbf{RV}_{B}$.
A relational variable edge (RVE) implies \emph{direct} influence arising
from one or more dependencies in $\mathbf{D}$. There is an RVE $P.X\!\rightarrow\!Q.Y$
if there exists a dependency $R.X\!\rightarrow\![I_{Y}].Y\in\mathbf{D}$
that can be interpreted as a direct influence from $P.X$ to $Q.Y$
from perspective $B$. Such an interpretation is implemented by an
$\mathsf{extend}$ function, which takes two relational paths and
produces a set of relational paths: If $P\in\mathsf{extend}(Q,R)$,
then there exists an RVE $P.X\!\rightarrow\!Q.Y$ where 
\begin{equation}
\mathsf{extend}(Q,R)=\{Q^{1:|Q|-i}+R^{i:}|i\in\mathsf{pivots}(\tilde{Q},R)\}\!\cap\!\mathbf{P}_{\mathcal{S}},\label{eq:extend}
\end{equation}
$\mathsf{pivots}(S,T)\!=\!\{i|S^{1:i}\!=\!T^{1:i}\}$, and `$+$'
is a concatenation operator. We will use a binary \emph{join} operator
`$\Join$' for $\mathsf{extend}$ and denote $Q^{1:|Q|-i}+R^{i:}$
by $Q\!\Join_{i}\!R$ for a pivot $i$. A schematic overview of $\mathsf{extend}$
is shown in \prettyref{fig:extend-method}. 

We denote by $\mathbf{IV}_{B}$ the set of \emph{intersection variables}
(IVs), i.e., unordered pairs of \emph{intersectable} relational variables
in $\mathbf{RV}_{B}$. Given two RVs $P.X$ and $P^{\prime}.X$ that
are intersectable with each other, we denote the resulting intersection
variable by $P.X\cap P^{\prime}.X$ (Here, the intersection symbol
`$\cap$' denotes \emph{intersectability }of the two relational
variables). By the definition \citep{maier2013rds-workshop}, if there
exists an RVE $P.X\!\rightarrow\!Q.Y$, then there exist edges $P.X\!\cap\!P^{\prime}.X\!\rightarrow\!Q.Y$
and $P.X\!\rightarrow\!Q.Y\!\cap\!Q^{\prime}.Y$ for every $P^{\prime}$
and $Q^{\prime}$ intersectable with $P$ and $Q$, respectively.
The IVs and the edges that connect them with RVs (IVEs) correspond
to \emph{indirect influences} (arising from intersectability) as opposed
to \emph{direct} influence due to dependencies (which are covered
by RVs and RVEs). We denote by $\mathbf{IVE}_{B}$ the set of all
such edges that connect RVs with IVs. 

Two AGGs with different perspectives share no vertices nor edges.
Hence, we view all AGGs, $\{AGG_{B}\}_{B\in\mathcal{I}}$, as a collection
or a single multi-component graph $\mathbf{AGG}=\bigcup_{B\in\mathcal{I}}AGG_{B}$.
We similarly define $\mathbf{RV}$, $\mathbf{IV}$, $\mathbf{RVE}$,
and $\mathbf{IVE}$ as the unions of their perspective-based counterparts.

\begin{figure}
\center\scriptsize\scalebox{0.95}{\begin{tikzpicture}[scale=0.9,->,>=stealth',shorten >=1pt,auto,node distance=2em,
                    semithick]
  \tikzstyle{every node}=[scale=0.9,rounded corners=3pt,rectangle,fill=none,draw=black,text=black]
\node                              (E)                      {$\mathsf{[Emp].comp}$};
\node                              (EDP)  [right=2em of E]  {$\mathsf{[Emp,Dev,Prod].succ}$};
\node[circle,fill=white,scale=0.7] (XEDP) at (EDP.north west) {$V$};
\node[very thick]                  (EDPFB) [right=2em of EDP] {$\mathsf{[Emp,Dev,Prod,Fund,Biz].rev}$};
\node[circle,fill=white,scale=0.7] (XEDPFB) at (EDPFB.north west) {$W$};

\node[very thick]                  (EDPDE) at ($(EDP)+(-1.6,-1)$) {$\mathsf{[Emp,Dev,Prod,Dev,Emp].comp}$};
\node[circle,fill=white,scale=0.7] (XEDPDE) at (EDPDE.north west) {$X$};
\node                              (EDPDEDP) at ($(EDPDE)+(1,-1)$) {$\mathsf{[Emp,Dev,Prod,Dev,Emp,Dev,Prod].succ}$};
\node[circle,fill=white,scale=0.7] (XEDPDEDP) at (EDPDEDP.north west) {$Z$};
\node                              (EDPFBFP) at ($(EDPFB)+(-1,1)$) {$\mathsf{[Emp,Dev,Prod,Fund,Biz,Fund,Prod].succ}$};
\node[circle,fill=white,scale=0.7] (XEDPFBFP) at (EDPFBFP.north west) {$U$};
\node              (IV) at ($(EDPFB)+(-0.5,-1)$) {$\mathsf{\substack{\mathsf{[Emp,Dev,Prod,Dev,Emp,Dev,Prod].succ}\\ \cap \\ \mathsf{[Emp,Dev,Prod,Fund,Biz,Fund,Prod].succ}}}$};
\node[circle,fill=white,scale=0.7] (XIV) at (IV.north west) {$Y$};

\draw[->] (E.east) -- (EDP.west);
\draw[->] (EDP.east) -- (EDPFB.west);
\draw[->] (EDPDE.east) -- (IV.west);
\draw[->] (EDPDE.south) -- (EDPDEDP.north);
\draw[->] (EDPDE.north) -- (EDP.south);
\draw[->] (IV.north) -- (EDPFB.south);
\draw[->] (EDPFBFP.south) -- (EDPFB.north);

\end{tikzpicture}}

\protect\caption{\label{fig:agg_example}An AGG example excerpted from \citet{maier2014thesis}
with \emph{business unit }($\mathsf{Biz}$) which \emph{funds} ($\mathsf{Fund}$)
its \emph{products} from its revenue ($\mathsf{rev}$). The revenue
of business units that fund the products developed by an employee
($W$) is affected by the employee's co-workers' competence ($X$),
i.e., $\bar{W}\!\protect\not\Perp\!\bar{X}$. Two are conditionally
independent by blocking both $V$ and $Y$. Since IV $Y$ is in $\bar{U}$
and $\bar{Z}$, both $\bar{W}\!\Perp\!\bar{X}|\overline{\{V,U\}}$
and $\bar{W}\!\Perp\!\bar{X}|\overline{\{V,Z\}}$ hold, which are
equivalent to $\left(W\!\perp\!X|\{V,U\}\right)_{\mathcal{M}}$ and
$\left(W\!\perp\!X|\{V,Z\}\right)_{\mathcal{M}}$, respectively.}
\end{figure}
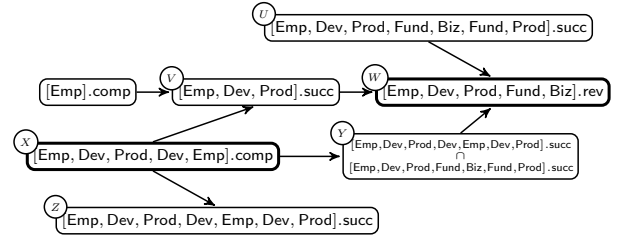

For any mutually disjoint sets of relational variables $\mathbf{U},\mathbf{V}$,
and $\mathbf{W}$, one can test $\mathbf{U}\perp\mathbf{V}\mid\mathbf{W}$,
conditional independence admitted by the underlying probability distribution,
by checking $\bar{\mathbf{U}}\Perp\bar{\mathbf{V}}\mid\bar{\mathbf{W}}$
(traditional)\emph{ d}-separation on an AGG\footnote{We denote conditional independence by `$\perp$' in general. We use
`$\Perp$' to represent (traditional) \emph{d}-separation on a directed
acyclic graph., e.g., $\mathbf{AGG}_{\mathcal{M}}$ or $GG_{\mathcal{M}\sigma}$.
Furthermore, we parenthesize conditional independence and use a subscript
to specify the scope of the conditional independence, if necessary.}, where $\mathbf{\bar{V}}$ includes $\mathbf{V}$ and their related
IVs, $\bar{\mathbf{V}}\!=\!\mathbf{V}\!\cup\!\{V\!\cap\!T\!\in\!\mathbf{IV}\mid V\!\in\!\mathbf{V}\}.$
\prettyref{fig:agg_example} illustrates relational \emph{d}-separation
on an AGG. 

We later show that the preceding definition of AGG does \emph{not}
properly abstract all ground graphs; nor does it guarantee the correctness
of reasoning about relational \emph{d}-separation in an RCM. We revise
the definition of AGG (\prettyref{sub:Revision-of-Abstract}) so as
to ensure that the resulting AGG abstracts all ground graphs. However,
we find that even with the revised definition of AGG, the AGG representation
is \emph{not complete} for relational \emph{d}-separation\emph{,}
that is, there can exist conditional independence relations in an
RCM that are not entailed by AGG (\prettyref{sub:COUNTEREXAMPLE}).
A careful examination of the lack of completeness of AGG for relational
\emph{d}-separation with respect to causal faithfulness yields useful
insights that allow us to make use of weaker notions of faithfulness
to learn RCM from data (\prettyref{sub:RELATING-NON-COMPLETENESS-WITH}).

\subsection{\label{sub:Revision-of-Abstract} ABSTRACT GROUND GRAPHS - A REVISED
DEFINITION}

Because of the importance of $\mathbf{IV}$ and $\mathbf{IVE}$ in
$\mathbf{AGG}$ in reasoning about relational \emph{d}-separation,
it is possible that errors in abstracting all ground graphs could
lead to errors in CI relations inferred from an $\mathbf{AGG}$. We
proceed to show that 1) the criteria for determining \emph{intersectability}
\citep{maier2014thesis} are \emph{not} \emph{sufficient,} and 2)
the definition of $\mathbf{IVE}$, as it stands, does not guarantee
the \emph{soundness} of AGG as an abstract representation of the all
ground graphs of an RCM. We provide the\emph{ necessary and sufficient}
criteria for determining IVs and a \emph{sound} definition for IVEs.

\subsubsection{\label{sub:Intersectability-and-IV}Intersectability and IV}

The declarative characterization of \emph{intersectability} (\prettyref{eq:intersectability-formula})
does not offer practical procedural criteria to determine \emph{intersectability}.
Based on the criteria \citep{maier2014thesis}, two different relational
paths $P$ and $Q$ are \emph{intersectable} if and only if 1) they
share the same perspective, say $B\in\mathcal{I}$, and 2) they share
the common terminal class, and 3) one path is \emph{not} a prefix
of the other. We will prove that the preceding criteria are \emph{not}
\emph{sufficient}. In essence, we will show the conditions under which
non-emptiness of $P|_{b}\cap Q|_{b}$ for any $b\in\sigma(B)$ in
any skeleton $\sigma$ always contradicts the BBS. For the proof,
we define $\mathsf{LLRSP}(P,Q)$ (\emph{the length} of \emph{the longest
required shared path}) for two relational paths $P$ and $Q$ of the
common perspective as 
\[
\max\{\ell\mid P^{1:\ell}=Q^{1:\ell},\,\forall_{\sigma\in\Sigma_{\mathcal{S}}}\forall_{b\in\sigma\left(B\right)}\left|P^{1:\ell}|_{b}\right|=1\}.
\]
$\mathsf{LLRSP}(P,Q)$ is computed as follows. Initially set $\ell\!=\!1$
since $P_{1}\!=\!Q_{1}$. Repeat incrementing $\ell$ by $1$ if $P_{\ell+1}=Q_{\ell+1}$
and either $P_{\ell}\in\mathcal{R}$ or $P_{\ell}\in\mathcal{E}$
with $\mathsf{card}(P_{\ell},P_{\ell+1})=\mathsf{one}$.
\begin{lem}
\label{lem:intersectability-nec-suf}Given a relational schema $\mathcal{S}$,
let $P$ and $Q$ be two different relational paths satisfying the
(necessary) criteria of \citet{maier2014thesis} and $\left|Q\right|\leq\left|P\right|$.
Let $m$ and $n$ be $\mathsf{LLRSP}(P,Q)$ and $\mathsf{LLRSP}(\tilde{P},\tilde{Q})$,
respectively. Then, $P$ and $Q$ are intersectable if and only if
$m+n\leq\left|Q\right|$.\end{lem}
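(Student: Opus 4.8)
The plan is to read the combinatorial condition $m+n\le|Q|$ as a statement about whether a single skeleton can route both $P$ and $Q$ to a common terminal item without violating the bridge burning semantics (BBS), and to prove the two directions separately. Write $p=|P|$, $q=|Q|$ with $q\le p$, $m=\mathsf{LLRSP}(P,Q)$, and $n=\mathsf{LLRSP}(\tilde P,\tilde Q)$. First I record the forced-coincidence facts that drive the hard direction. If $t\in P|_b\cap Q|_b$ in some $\sigma\in\Sigma_{\mathcal{S}}$ with $b\in\sigma(B)$, witnessed by walks $b=a_1\sim\cdots\sim a_p=t$ along $P$ and $b=a'_1\sim\cdots\sim a'_q=t$ along $Q$, then by the defining property of $\mathsf{LLRSP}$ the shared prefix is forced to a single item at each level, so $a_j=a'_j$ for $1\le j\le m$; reading from the terminal end and using $\mathsf{LLRSP}(\tilde P,\tilde Q)=n$, the shared suffix is likewise forced, so $a_{p-k}=a'_{q-k}$ for $0\le k\le n-1$, equivalently $a'_j=a_{p-q+j}$ for $q-n+1\le j\le q$. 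Both facts are immediate from the requirement in the definition of $\mathsf{LLRSP}$ that the relevant prefixes have singleton terminal sets in \emph{every} skeleton.

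For the ``only if'' direction I argue the contrapositive: if $m+n>q$ then $P|_b\cap Q|_b=\emptyset$ for every $\sigma$ and $b$, so $P$ and $Q$ are not intersectable. Suppose for contradiction that $t$ is a common terminal. Since $m+n>q$ the ranges $[1,m]$ and $[q-n+1,q]$ overlap, so I may pick $j$ with $q-n+1\le j\le m$. The two facts give $a_j=a'_j$ and $a'_j=a_{p-q+j}$, hence $a_j=a_{p-q+j}$. If $p>q$, then $p-q+j\ne j$, so a single item occurs at two distinct levels of the $P$-walk, contradicting the disjointness of the sets $P^{1:\ell}|_b$ that BBS enforces. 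If $p=q$, then $m+n>q$ makes $[1,m]\cup[q-n+1,q]=[1,q]$, which forces $P_\ell=Q_\ell$ for every $\ell$ and hence $P=Q$, contradicting that $P$ and $Q$ are distinct. Either way we reach a contradiction, so intersectability implies $m+n\le q$.

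For the ``if'' direction, assuming $m+n\le q$, I build one skeleton realizing a common terminal, shaped like a diamond: a shared stem instantiating $P_1=Q_1,\dots,P_m=Q_m$ as items $c_1=b,\dots,c_m$; a shared tail instantiating $P_{p-n+1}=Q_{q-n+1},\dots,P_p=Q_p$ as items $s_n,\dots,s_1=t$; and two internally disjoint middles, instantiating positions $m+1,\dots,p-n$ of $P$ and positions $m+1,\dots,q-n$ of $Q$ with fresh items. The inequality $m+n\le q\le p$ guarantees both middles have nonnegative length, so the branches can be made vertex-disjoint except at $c_m$ and $s_n$. The crucial observation is that the branching out of $c_m$ and the merging into $s_n$ are schema-consistent \emph{precisely because} the two $\mathsf{LLRSP}$ values stopped where they did: maximality of $m$ forces the step out of $c_m$ to be either along two different classes $P_{m+1}\ne Q_{m+1}$ or along a class whose cardinality from $c_m$ is $\mathsf{many}$, and symmetrically for $s_n$ read through $\tilde P,\tilde Q$; in every case the schema permits $c_m$ two distinct successors and $s_n$ two distinct predecessors. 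Verifying that each path visits pairwise distinct items then shows that BBS removes nothing en route, so $t\in P|_b$ and $t\in Q|_b$ simultaneously, giving intersectability.

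The main obstacle is exactly this ``if'' construction: exhibiting the diamond as a legal skeleton while certifying that BBS does not silently merge or burn items I intend to keep separate. The delicate work is at the two junctions, where I must read off from the failure of $\mathsf{LLRSP}$ to advance which of the two enabling conditions (distinct classes, or cardinality $\mathsf{many}$) holds, and use it to justify two distinct neighbors; and in the bookkeeping that, when a middle class happens to coincide with a stem or tail class, the edges I add still route each path's terminal-set recursion so that $t$ survives the BBS subtraction $\setminus\bigcup_{k<\ell}P^{1:k}|_b$. Once the construction is pinned down, non-emptiness of $P|_b\cap Q|_b$ is immediate, which together with the contrapositive argument completes the equivalence.
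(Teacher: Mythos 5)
Your proposal matches the paper's proof essentially step for step: the same contrapositive argument for the ``only if'' direction (prefix and suffix items forced to coincide by the singleton condition in $\mathsf{LLRSP}$, an overlapping index when $m+n>\left|Q\right|$ yielding a repeated item along $P$ that violates BBS when $\left|P\right|>\left|Q\right|$, and $P=Q$ when $\left|P\right|=\left|Q\right|$), and the same skeleton construction for the ``if'' direction (unique items for $Q$ plus fresh items for $P^{m+1:\left|P\right|-n}$, completed as in Lemma 3.4.1 of \citet{maier2014thesis}). The only detail the paper makes explicit that you leave implicit is the parity observation that $m+n=\left|Q\right|$ forces $\left|P\right|\geq\left|Q\right|+2$ (guaranteeing at least two fresh items in $P$'s middle); the junction bookkeeping you flag as delicate is precisely what the paper delegates to the cited construction.
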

\begin{proof}
See Appendix.
\end{proof}
The lemma demonstrates the criteria by \citet{maier2014thesis} do
not rule out the case of $m+n>\left|Q\right|$ where $P$ and $Q$
cannot be intersectable.

\subsubsection{\label{sub:Co-intersectability-and-IVE}Co-intersectability and IVE}

Based on the definition \citep{maier2014thesis}, an IVE exists between
an IV, $U\!\cap\!V$, and an RV, $W$, if and only if there exists
an RVE between $U$ and $W$ or $V$ and $W$. It would indeed be
appealing to define IV, $U\!\cap\!V$, such that it inherits properties
of the corresponding RVs, $U$ and $V$. However, the abstract ground
graph resulting from such a definition turns out to be not a sound
representation of the underlying ground graphs. We proceed to prove
this result.
\begin{figure}
\begin{eqnarray*}
1)\,\exists_{\sigma\in\Sigma_{\mathcal{S}}}\exists_{b\in\sigma\left(B\right)}\exists_{i_{j}\in Q|_{b}} & R|_{i_{j}}\cap P|_{b}\quad\qquad\neq\emptyset\\
2)\,\exists_{\sigma\in\Sigma_{\mathcal{S}}}\exists_{b\in\sigma\left(B\right)}\quad\qquad & \quad\qquad P|_{b}\cap P^{\prime}|_{b}\neq\emptyset\\
3)\,\exists_{\sigma\in\Sigma_{\mathcal{S}}}\exists_{b\in\sigma\left(B\right)}\exists_{i_{j}\in Q|_{b}} & \,R|_{i_{j}}\cap P|_{b}\cap P^{\prime}|_{b}\neq\emptyset
\end{eqnarray*}

\protect\caption{\label{fig:Comparison-of-conditions}Comparison of 1) the necessary
condition of the existence of an RVE $P\rightarrow Q$ through $R$,
the cause path of a dependency (attributes are omitted), 2) intersectability
between $P$ and $P^{\prime}$, and 3) co-intersectability of $\langle Q,R,P,P^{\prime}\rangle$.}
\end{figure}

\begin{defn}[Co-intersectability]
\label{def:co-inter}Given a relational schema $\mathcal{S}$, let
$Q$, $R$, $P$, and $P^{\prime}$ be valid relational paths of the
same perspective $B$ where $P\in Q\!\Join\!R$ and $P$ and $P^{\prime}$
are intersectable. Then, a tuple $\langle Q,R,P,P^{\prime}\rangle$
is said to be \emph{co-intersectable} if and only if 
\begin{equation}
\exists_{\sigma\in\Sigma_{\mathcal{S}}}\exists_{b\in\sigma\left(B\right)}\exists_{i_{j}\in Q|_{b}}\,R|_{i_{j}}\cap P|_{b}\cap P^{\prime}|_{b}\neq\emptyset.\label{eq:co-inter}
\end{equation}
We relate co-intersectability with the definition of IVE. Let an RVE
$P.X\!\rightarrow\!Q.Y$ is due to some dependencies $R.X\!\rightarrow\![I_{Y}].Y\in\mathbf{D}$
where $P\in Q\!\Join\!R$. This implies 
\begin{equation}
\exists_{\sigma\in\Sigma_{\mathcal{S}}}\exists_{b\in\sigma\left(B\right)}\exists_{i_{j}\in Q|_{b}}\,R|_{i_{j}}\cap P|_{b}\neq\emptyset,\label{eq:direct-influence}
\end{equation}
and there are edges from $X$ of $R|_{i_{j}}\!\cap\!P|_{b}$ to $Y$
of $Q|_{b}$ in $GG_{\sigma}$. In order for the intersectability
of $P^{\prime}$ with $P$ translates into an influence between $P$
and $Q$, it is necessary that there exists a skeleton that admits
such influence. However, we can construct a counterexample that satisfies
the necessary conditions for the existence of an RVE and the conditions
for intersectability but does \emph{not} satisfy the conditions for
co-intersectability (see \prettyref{fig:Comparison-of-conditions}
for a comparison of \prettyref{eq:direct-influence}, \ref{eq:intersectability-formula},
and \ref{eq:co-inter}).
\end{defn}
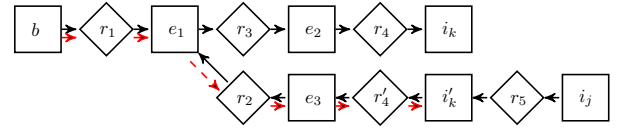
\begin{figure}
\center\begin{tikzpicture}[scale=0.7, every node/.style={scale=0.7},->,>=stealth',shorten >=1pt,auto,node distance=1.3cm,semithick]
 \tikzstyle{every state}=[rectangle, fill=white,draw=black,text=black,text width=7pt, text height=7pt]

\node[state]          (B)                  {$b$};
\node[state, diamond] (R1)  [right of=B]   {$r_1$};
\node[state]          (E1)  [right of=R1]  {$e_1$};
\node[state, diamond] (R3)  [right of=E1]  {$r_3$};
\node[state]          (E2)  [right of=R3]  {$e_2$};
\node[state, diamond] (R4)  [right of=E2]  {$r_4$};
\node[state]          (Ik)  [right of=R4]  {$i_k$};
\node[state, diamond] (R2)  [below of=R3]  {$r_2$};
\node[state]          (E3)  [right of=R2]  {$e_3$};
\node[state, diamond] (R42) [right of=E3]  {$r_4^\prime$};
\node[state]          (Ik2) [right of=R42] {$i_k^{\prime}$};
\node[state, diamond] (R5)  [right of=Ik2] {$r_5$};
\node[state]          (Ij)  [right of=R5]  {$i_j$};

 \path
(B)   edge     node {$$} (R1)
(R1)  edge     node {$$} (E1)
(E1)  edge     node {$$} (R3)
(R3)  edge     node {$$} (E2)
(E2)  edge     node {$$} (R4)
(R4)  edge     node {$$} (Ik)
(E1)  edge[<-] node {$$} (R2)
(R2)  edge[<-] node {$$} (E3)
(E3)  edge[<-] node {$$} (R42)
(R42) edge[<-] node {$$} (Ik2)
(Ik2) edge[<-] node {$$} (R5)
(R5)  edge[<-] node {$$} (Ij)

([yshift=-1ex]B.east) edge [dashed,draw=red] node {$$} ([yshift=-1ex]R1.west)
([yshift=-1ex]R1.east) edge[dashed,draw=red] node {$$} ([yshift=-1ex]E1.west)
([xshift=-1ex,yshift=-1ex]E1.south east) edge[dashed,draw=red] node {$$} ([yshift=-1ex,xshift=-1ex]R2.north west)
([yshift=-1ex]R2.east) edge[dashed,draw=red] node {$$} ([yshift=-1ex]E3.west)
([yshift=-1ex]E3.east) edge[dashed,draw=red] node {$$} ([yshift=-1ex]R42.west)
([yshift=-1ex]R42.east) edge[dashed,draw=red] node {$$} ([yshift=-1ex]Ik2.west);
\end{tikzpicture}

\protect\caption{\label{fig:counterexample-for-cointer}A schematic illustration of
Example \ref{example:cointersectability-counterexample} superimposing
a skeleton and relational paths. The items for $P^{\prime}$ starting
with $b$ should follow a dashed red line, and, hence, $P^{\prime}$
cannot be related to a ground graph edge between $i_{k}$ and $i_{j}$,
i.e., an RVE between $P$ and $Q$ (attributes and connections between
entities and relationships are omitted).}
\end{figure}

\begin{example}
\label{example:cointersectability-counterexample}Let $\mathcal{S}$
be a relational schema where $\mathbf{E}=\left\{ I_{j},I_{k},B,E_{1},E_{2},E_{3}\right\} $,
$\mathbf{R}=\left\{ R_{i}\right\} _{i=1}^{5}$ such that $R_{1}=\left\langle B,E_{1}\right\rangle $,
$R_{2}=\left\langle E_{1},E_{3}\right\rangle $, $R_{3}=\left\langle E_{1},E_{2}\right\rangle $,
$R_{4}=\left\langle E_{2},E_{3},I_{k}\right\rangle $, $R_{5}=\left\langle I_{k},I_{j}\right\rangle $
with the cardinality of each relationship and each entity in the relationship
being $\mathsf{one}$. Let 
\begin{itemize}
\item \itemsep0em  $Q=\left[B,R_{1},E_{1},R_{2},E_{3},R_{4},I_{k},R_{5},I_{j}\right]$,
\item $R=\left[I_{j},R_{5},I_{k},R_{4},E_{3},R_{2},E_{1},R_{3},E_{2},R_{4},I_{k}\right]$, 
\item $P=\left[B,R_{1},E_{1},R_{3},E_{2},R_{4},I_{k}\right]$, and
\item $P^{\prime}=\left[B,R_{1},E_{1},R_{2},E_{3},R_{4},I_{k}\right]$. 
\end{itemize}
Observe that
\begin{enumerate}
\item \itemsep0em  $P\in\mathsf{extend}\left(Q,R\right)$;
\item $P^{\prime}$ and $P$ are intersectable; and 
\item $P^{\prime}$ is a subpath of $Q$.
\end{enumerate}
This example satisfies \prettyref{eq:intersectability-formula} and
\prettyref{eq:direct-influence}. Assume for contradiction that there
exists a skeleton $\sigma$ satisfying \prettyref{eq:co-inter}. Since,
in this example, the cardinality of each relationship and each entity
in the relationship is $\mathsf{one}$, for each $b\in\sigma\left(B\right)$,
there exists only one $i_{j}\in Q|_{b}$ and only one $i_{k}\in P|_{b}$.
By the assumption, $P^{\prime}|_{b}=\left\{ i_{k}\right\} $. Since
$P^{\prime}$ is a subpath of $Q$, $P^{\prime}|_{b}$ will end at
$i_{k}^{\prime}=R^{1:3}|_{i_{j}}$ (see \prettyref{fig:counterexample-for-cointer}).
Due to BBS, $R|_{i_{j}}\cap R^{1:3}|_{i_{j}}=\emptyset$, that is,
$\left\{ i_{k}\right\} \cap\left\{ i_{k}^{\prime}\right\} =\emptyset$.
This contradicts the assumption that $i_{k}=i_{k}^{\prime}$.
\end{example}
This counterexample clearly represents there is an inter-dependency
between intersection variables and RVEs. Therefore, we revise the
definition of $\mathbf{IVE}$ accompanying co-intersectability.
\begin{defn}[IVE]
\label{def:IVE}There exists an IVE edge, $P.X\!\cap\!P^{\prime}.X\!\rightarrow\!Q.Y$
(or $P.X\!\rightarrow\!Q.Y\!\cap\!Q^{\prime}.Y$), if and only if
there exists a relational path $R$ such that $R.X\!\rightarrow\![I_{Y}].Y\in\mathbf{D},$
$P\in Q\!\Join\!R$, and $\left\langle Q,R,P,P^{\prime}\right\rangle $
(or $\langle P,\tilde{R},Q,Q^{\prime}\rangle$) is \emph{co-intersectable}.
\end{defn}
To determine IVEs, \emph{co-intersectability} of a tuple can be computed
by solving a constraint satisfaction problem involving four paths
in the tuple.

\paragraph{Implications of Co-intersectability}

We investigated the necessary and sufficient criteria for intersectability
and revised the definition of IVE so as to guarantee that AGG correctly
abstracts all ground graphs as asserted (although incorrectly) by
Theorem 4.5.2 \citep{maier2014thesis}. The new criterion, called
\emph{co-intersectability,} is especially interesting since it describes
the interdependency between intersection variables and related relational
variable edges. Several of the key results (e.g., soundness and completeness
of AGG for relational \emph{d}-separation, Theorem 4.5.2) and concepts
(e.g., ($B$,$h$)-reachability) of \citet{maier2014thesis} are based
on \emph{independence} between intersection variables and related
relational variable edges. Hence, it is useful to carefully scrutinize
the relationship between AGG and relational \emph{d}-separation.

\section{\label{sec:NON-COMPLETENESS-OF-AGG}NON-COMPLETENESS OF AGG FOR RELATIONAL
D-SEPARATION}

We first revisit the definition of relational \emph{d}-separation.
Given three disjoint sets of relational variables $\mathbf{U}$, $\mathbf{V}$,
and $\mathbf{W}$ of a common perspective $B\in\mathcal{I}$, $\mathbf{U}$
and $\mathbf{V}$ are relational \emph{d}-separated given $\mathbf{W}$,
denoted by $\left(\mathbf{U}\!\perp\!\mathbf{V}\mid\mathbf{W}\right)_{\mathcal{M}}$,
if and only if 
\[
\forall_{\sigma\in\Sigma_{\mathcal{S}}}\forall_{b\in\sigma\left(B\right)}\left(\mathbf{U}|_{b}\Perp\mathbf{V}|_{b}\mid\mathbf{W}|_{b}\right)_{GG_{\mathcal{M}\sigma}}.
\]
From Theorem 4.5.4 of \citep{maier2014thesis}, the lifted representation
$\mathbf{AGG}_{\mathcal{M}}$ is said to be sound (or complete) for
relational \emph{d}-separation of $\mathcal{M}$ if (traditional)
\emph{d}-separation holds on the $\mathbf{AGG}_{\mathcal{M}}$ with
a modified CI query only when (or whenever) relational \emph{d}-separation
holds true. Then, the completeness of AGG for relational \emph{d}-separation
can be represented as
\[
\left(\mathbf{U}\perp\mathbf{V}\mid\mathbf{W}\right)_{\mathcal{M}}\Rightarrow\left(\bar{\mathbf{U}}\Perp\bar{\mathbf{V}}\mid\bar{\mathbf{W}}\right)_{\mathbf{AGG}_{\mathcal{M}}}.
\]
The completeness can be proved by the construction of a skeleton $\sigma\in\Sigma_{\mathcal{S}}$
demonstrating \emph{d}-connection $\left(\mathbf{U}|_{b}\not\Perp\mathbf{V}|_{b}\mid\mathbf{W}|_{b}\right)_{GG_{\mathcal{M}\sigma}}$
for some $b\in\sigma\left(B\right)$ if $(\bar{\mathbf{U}}\not\Perp\bar{\mathbf{V}}\mid\bar{\mathbf{W}})_{\mathbf{AGG}_{\mathcal{M}}}$.
In other words, we might disprove the completeness by showing

\begin{eqnarray*}
 & (\bar{\mathbf{U}}\!\not\Perp\!\bar{\mathbf{V}}\mid\bar{\mathbf{W}})_{\mathbf{AGG}_{\mathcal{M}}}\wedge\qquad\qquad\qquad\qquad\qquad\\
 & \qquad\qquad\forall_{\sigma\in\Sigma_{\mathcal{S}}}\forall_{b\in\sigma\left(B\right)}\left(\mathbf{U}|_{b}\!\Perp\!\mathbf{V}|_{b}\mid\mathbf{W}|_{b}\right)_{GG_{\mathcal{M}\sigma}}.
\end{eqnarray*}

\subsection{\label{sub:COUNTEREXAMPLE}A COUNTEREXAMPLE}

The following counterexample shows that AGG is not complete for relational
\emph{d}-separation.
\begin{example*}
Let $\mathcal{S}=\left\langle \mathcal{E},\mathcal{R},\mathcal{A},\mathsf{card}\right\rangle $
be a relational schema such that: $\mathcal{E}=\left\{ E_{i}\right\} _{i=1}^{5}$;
$\mathcal{R}=\left\{ R_{j}\right\} _{j=1}^{3}$ with $R_{1}=\left\langle E_{1},E_{2},E_{4}\right\rangle $,
$R_{2}=\left\langle E_{2},E_{3}\right\rangle $, and $R_{3}=\left\langle E_{3},E_{4},E_{5}\right\rangle $;
$\mathcal{A}=\left\{ E_{2}:\left\{ Y\right\} ,\,E_{3}:\left\{ X\right\} ,\,E_{5}:\left\{ Z\right\} \right\} $;
and $\forall_{R\in\mathcal{R}}\forall_{E\in R}\mathsf{card}\left(R,E\right)=\mathsf{one}$.
Let $\mathcal{M}=\left\langle \mathcal{S},\mathbf{D}\right\rangle $
be a relational model with 
\[
\mathbf{D}=\left\{ D_{1}.X\rightarrow[I_{Y}].Y,\,D_{2}.Z\rightarrow[I_{Y}].Y\right\} 
\]
such that $D_{1}=\left[E_{2},R_{2},E_{3},R_{3},E_{4},R_{1},E_{2},R_{2},E_{3}\right]$
and $D_{2}=\left[E_{2},R_{2},E_{3},R_{3},E_{5}\right]$. Let $P.X$,
$Q.Y$, $S.Z$, and $S^{\prime}.Z$ be four relational variables of
the same perspective $B=E_{1}$ where their relational paths are distinct
where
\begin{itemize}
\item \itemsep0em  $P=\left[E_{1},R_{1},E_{2},R_{2},E_{3}\right]$,
\item $Q=\left[E_{1},R_{1},E_{4},R_{3},E_{3},R_{2},E_{2}\right]$,
\item $S=\left[E_{1},R_{1},E_{4},R_{3},E_{5}\right]$, and
\item $S^{\prime}=\left[E_{1},R_{1},E_{2},R_{2},E_{3},R_{3},E_{5}\right]$.
\end{itemize}
\end{example*}

Given the above example, we can make two claims.

\begin{figure}
\scriptsize\center\begin{tikzpicture}[->,>=stealth',auto,node distance=1cm,
                    semithick]
\tikzstyle{every state}=[rectangle,ultra thick, fill=none,draw=black,text=black]

\node[state]         (G)                    {$b$};
\node[state,diamond] (H) [left of=G]        {$r_1$};
\node[state,thick]   (I) [above of=H]       {$e_4$};
\node[state,diamond] (J) [above of=I]       {$r_3$};
\node[state]         (K) [right of=J]       {$i_z$};

\node[state]         (C) [left of=J]        {$e_3$};
\node[state,diamond] (B) [left of=C]        {$r_2^\prime$};
\node[state]         (A) [left of=B]        {$i_y$};

\draw[-] (G) -- (H) -- (I) -- (J) -- (K);
\draw[-] (J) -- (C) -- (B) -- (A) -- (H);
\draw[-,ultra thick] (G) -- (H) -- (A) -- (B) -- (C)--(J)--(K);
\draw[->,red,dashed] ($(K.north)$) to [out=135, in=45,looseness=0.5] ($(A.north)$);

\end{tikzpicture}

\protect\caption{\label{fig:co-intersectability}Co-intersectability of $\left\langle Q,D_{2},S,S^{\prime}\right\rangle $
where $i_{y}\in Q|_{b}$, $i_{z}\in D_{2}|_{i_{y}}$, $i_{z}\in S|_{b}$,
and $i_{z}\in S^{\prime}|_{b}$. The thick line highlights items for
$S^{\prime}$ from $b$ to $i_{z}$. The red dashed line represents
the instantiation of an RVE $S.Z\rightarrow Q.Y$ as $i_{z}.Z\rightarrow i_{y}.Y$
in a ground graph (attributes are omitted). }
\end{figure}
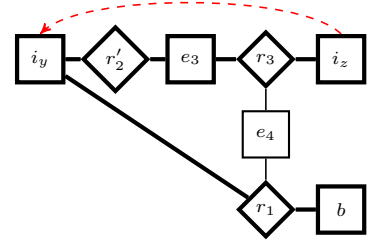

\begin{claim}
$\left(\overline{P.X}\not\Perp\overline{S^{\prime}.Z}\mid\overline{Q.Y}\right)_{\mathbf{AGG}_{\mathcal{M}}}$.\end{claim}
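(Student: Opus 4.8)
The plan is to exhibit a single explicit $d$-connecting walk in $\mathbf{AGG}_{\mathcal{M}}$ running between a vertex of $\overline{P.X}$ and a vertex of $\overline{S'.Z}$ that is unblocked by conditioning on $\overline{Q.Y}$. The natural candidate is the length-two collider path
\[
P.X\;\longrightarrow\;Q.Y\;\longleftarrow\;\bigl(S.Z\cap S'.Z\bigr),
\]
in which $Q.Y$ is a collider sitting inside the conditioning set and the intersection variable $S.Z\cap S'.Z$ belongs to $\overline{S'.Z}$. Once both edges are shown to exist and to point \emph{into} $Q.Y$, the argument closes at once: $Q.Y\in\overline{Q.Y}$ activates the collider, there are no non-collider interior vertices to block, and the two endpoints lie in $\overline{P.X}$ and $\overline{S'.Z}$ respectively, so the walk certifies $d$-connection.

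To obtain the left edge I would show that $P.X\to Q.Y$ is an ordinary RVE induced by the dependency $D_1.X\to[I_Y].Y$. This amounts to verifying $P\in\mathsf{extend}(Q,D_1)$: I would compute $\mathsf{pivots}(\tilde Q,D_1)$ and check that the pivot $i=6$ gives $Q\Join_6 D_1=[E_1,R_1,E_2,R_2,E_3]=P$, together with the validity of this concatenation under the three rules of \prettyref{dfn:rcm:RP-relational-path}. The definition of $\mathbf{RVE}$ then yields the directed edge $P.X\to Q.Y$.

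For the right edge I would invoke the revised \prettyref{def:IVE} to produce an IVE $S.Z\cap S'.Z\to Q.Y$. Three ingredients are needed. First, $S$ and $S'$ must be intersectable, which I would confirm through \prettyref{lem:intersectability-nec-suf} by computing $m=\mathsf{LLRSP}(S',S)$ and $n=\mathsf{LLRSP}(\tilde S',\tilde S)$ and checking $m+n\le|S|$. Second, $S\in\mathsf{extend}(Q,D_2)$, which the pivot $i=4$ supplies via $Q\Join_4 D_2=[E_1,R_1,E_4,R_3,E_5]=S$. Third, the tuple $\langle Q,D_2,S,S'\rangle$ must be co-intersectable in the sense of \prettyref{def:co-inter}. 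Because $D_2.Z\to[I_Y].Y\in\mathbf{D}$, these three facts together trigger \prettyref{def:IVE}, and the resulting IVE is oriented \emph{into} $Q.Y$ since it inherits the direction of the underlying RVE $S.Z\to Q.Y$.

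The main obstacle is the third ingredient: co-intersectability is an existential assertion over \emph{all} skeletons, so I must actually build a witnessing skeleton in which some $i_y\in Q|_b$ has a common terminal item $i_z\in D_2|_{i_y}\cap S|_b\cap S'|_b$. I would construct it exactly as in \prettyref{fig:co-intersectability}, laying out $b,r_1,e_4,r_3,i_z,e_3,r_2',i_y$ so that the traversals of $S$, $S'$, $D_2$ from $i_y$, and $Q$ all land consistently on $i_z$ and $i_y$; the delicate point, mirroring Example~\ref{example:cointersectability-counterexample}, is to respect the bridge-burning semantics so that the three terminal sets truly overlap rather than being forced disjoint. Once this witness is verified, both incoming edges at the collider $Q.Y$ are in place, and since $S.Z\cap S'.Z\in\overline{S'.Z}$ while the augmented sets $\overline{P.X},\overline{Q.Y},\overline{S'.Z}$ are pairwise disjoint (their members carry the distinct attributes $X,Y,Z$), the displayed collider path establishes $\bigl(\overline{P.X}\not\Perp\overline{S'.Z}\mid\overline{Q.Y}\bigr)_{\mathbf{AGG}_{\mathcal{M}}}$.
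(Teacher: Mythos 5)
Your proposal is correct and takes essentially the same approach as the paper's proof: the identical collider path $P.X\rightarrow Q.Y\leftarrow S.Z\cap S^{\prime}.Z$, obtained from the RVE with pivot $P=Q\Join_{6}D_{1}$, the RVE with pivot $S\in Q\Join_{4}D_{2}$, intersectability of $S$ and $S^{\prime}$, and co-intersectability of $\left\langle Q,D_{2},S,S^{\prime}\right\rangle$ witnessed by the skeleton of \prettyref{fig:co-intersectability}. Your direct observation that the path's only interior vertex is the collider $Q.Y\in\overline{Q.Y}$ (with endpoints excluded from the conditioning set by their distinct attributes) is just a streamlined version of the paper's final step, which first conditions on $Q.Y$ and then notes that enlarging to $\overline{Q.Y}$ blocks nothing since $\overline{Q.Y}$ has only incoming edges.
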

\begin{proof}
See Appendix.
\end{proof}
Assuming that AGG is complete for relational \emph{d}-separation,
we can infer $\left(P.X\not\perp S^{\prime}.Z\mid Q.Y\right)_{\mathcal{M}}$
and there must exist a pair of a skeleton $\sigma$ and a base $b\in\sigma\left(B\right)$
that satisfies $\left(P.X|_{b}\not\Perp S^{\prime}.Z|_{b}\mid Q.Y|_{b}\right)_{GG_{\mathcal{M}\sigma}}$.
However, we claim that such a skeleton and base may not exist.

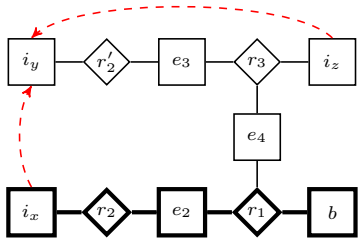
\begin{figure}
\scriptsize\center\begin{tikzpicture}[auto,->,>=stealth',node distance=1cm,
                    semithick]
\tikzstyle{every state}=[inner sep=0pt,rectangle,fill=none,draw=black,text=black]

\node[state,ultra thick]          (G)                    {$b$};
\node[state,diamond,ultra thick]  (H) [left of=G]        {$r_1$};
\node[state]                      (I) [above of=H]       {$e_4$};
\node[state,diamond]              (J) [above of=I]       {$r_3$};
\node[state]                      (K) [right of=J]       {$i_z$};

\node[state]                      (C) [left of=J]        {$e_3$};
\node[state,diamond]              (B) [left of=C]        {$r_2^\prime$};
\node[state]                      (A) [left of=B]        {$i_y$};

\node[state,ultra thick]          (F) [left of=H]        {$e_2$};
\node[state,,diamond,ultra thick] (E) [left of=F]        {$r_2$};
\node[state,ultra thick]          (D) [left of=E]        {$i_x$};

\draw[-] (G) -- (H) -- (I) -- (J) -- (K);
\draw[-] (J) -- (C) -- (B) -- (A);
\draw[-] (H) -- (F) -- (E) -- (D);

\draw[-,ultra thick] (G) -- (H) -- (F) -- (E) -- (D);
\draw[->,red,dashed] ($(K.north)$) to [out=135, in=45,looseness=0.5] ($(A.north)$);
\draw[->,red,dashed] ($(D.north)$) to [out=120, in=240,looseness=0.8] ($(A.south)$);

\end{tikzpicture}

\protect\caption{\label{fig:counterexample-for-completeness}A subgraph of ground graphs
to represent $i_{x}\rightarrow i_{y}\leftarrow i_{z}$. Only this
substructure satisfies BBS assumption and cardinality constraints. }
\end{figure}

\begin{claim}
There is no $\sigma\in\Sigma_{\mathcal{S}}$ and $b\in\sigma\left(B\right)$
such that 
\[
\left(P.X|_{b}\not\Perp S^{\prime}.Z|_{b}\mid Q.Y|_{b}\right)_{GG_{\mathcal{M}\sigma}}.
\]
\end{claim}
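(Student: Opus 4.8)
The plan is to exploit the very restricted structure that this particular $\mathcal{M}$ forces on \emph{every} ground graph $GG_{\mathcal{M}\sigma}$. Since the only dependencies are $D_{1}.X\!\rightarrow\![I_{Y}].Y$ and $D_{2}.Z\!\rightarrow\![I_{Y}].Y$, every edge of any ground graph points \emph{into} a $Y$-attribute of an $E_{2}$-item; hence every $X$-node and every $Z$-node is a pure source and every $Y$-node is a sink. Along any path between a node of $P.X|_{b}$ and a node of $S'.Z|_{b}$, each $Y$-node is therefore a collider and each intermediate source is a non-collider fork that is automatically unblocked, since a fork node carries attribute $X$ or $Z$ and so can never lie in the $Y$-valued conditioning set $Q.Y|_{b}$. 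Consequently such a path is active given $Q.Y|_{b}$ if and only if \emph{every} $Y$-node on it belongs to $Q.Y|_{b}$.

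First I would note that because $\mathsf{card}(R,E)=\mathsf{one}$ throughout, every terminal set is a singleton (or empty), so $Q.Y|_{b}$ contains at most one node, say $i_{y}$. A simple path visits each node at most once, so any active path carries at most one collider, and that collider must be $i_{y}$; since two sources are never adjacent, the first edge out of the $X$-endpoint already reaches a $Y$-node, so the path carries \emph{exactly} one collider. This collapses the entire claim to a single configuration: the only candidate active path is $i_{x}\!\rightarrow\!i_{y}\!\leftarrow\!i_{z}$ with $P|_{b}=\{i_{x}\}$ and $S'|_{b}=\{i_{z}\}$, and it is present precisely when both $i_{x}\in D_{1}|_{i_{y}}$ and $i_{z}\in D_{2}|_{i_{y}}$.

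Next I would name the items reached from $b$ by tracing terminal sets one step at a time: $b$ lies in a unique $r_{1}\in\sigma(R_{1})$ with $E_{2}$-participant $e_{2}^{(1)}$ and $E_{4}$-participant $e_{4}$; following $P$ yields $i_{x}=e_{3}^{(1)}$ via $r_{2}^{(1)}$; following $Q$ yields $i_{y}=e_{2}^{(3)}$ via $e_{4},r_{3},e_{3}^{(3)},r_{2}^{(3)}$; the path $D_{2}$ from $i_{y}$ reaches $e_{5}^{(3)}$ through $r_{3}$ (this is also $S|_{b}$); and following $S'$ yields $i_{z}=e_{5}^{(1)}$ through $e_{3}^{(1)}$ and its unique $R_{3}$-relationship $r_{3}^{(1)}$. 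The $Z$-edge condition $i_{z}\in D_{2}|_{i_{y}}$ then reads $e_{5}^{(1)}=e_{5}^{(3)}$; but an $E_{5}$-item sits in a unique $R_{3}$-relationship (cardinality one), so $e_{5}^{(1)}=e_{5}^{(3)}$ forces $r_{3}^{(1)}=r_{3}$ and hence $e_{3}^{(1)}=e_{3}^{(3)}$.

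The main obstacle, and the heart of the argument, is the $X$-edge condition, which I expect to require careful bookkeeping of the bridge-burning semantics along the long cause path $D_{1}$. Computing $D_{1}|_{i_{y}}$ from $i_{y}=e_{2}^{(3)}$ revisits the class $E_{3}$: its third step already records $e_{3}^{(3)}$ as visited, whereas its final step attempts to return to $e_{3}^{(1)}$. Thus, under BBS, the terminal item survives, so that $D_{1}|_{i_{y}}=\{e_{3}^{(1)}\}=P|_{b}$ and the $X$-edge exists, if and only if $e_{3}^{(1)}\neq e_{3}^{(3)}$; when $e_{3}^{(1)}=e_{3}^{(3)}$ the terminal item is burned, $D_{1}|_{i_{y}}=\emptyset$, and $i_{y}$ has no $X$-parent at all. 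Combining the two conditions yields the contradiction: the $Z$-edge forces $e_{3}^{(1)}=e_{3}^{(3)}$ while the $X$-edge forces $e_{3}^{(1)}\neq e_{3}^{(3)}$, so the two collider edges can never coexist. The single candidate path is therefore absent in every ground graph, which gives $\left(P.X|_{b}\Perp S'.Z|_{b}\mid Q.Y|_{b}\right)_{GG_{\mathcal{M}\sigma}}$ for all $\sigma$ and all $b$, exactly the obstruction depicted in \prettyref{fig:counterexample-for-completeness}.
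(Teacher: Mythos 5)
Your proof is correct and takes essentially the same route as the paper's: reduce, using the source/sink structure of the ground graphs and the all-$\mathsf{one}$ cardinalities, to the single candidate collider path $i_{x}\rightarrow i_{y}\leftarrow i_{z}$, and then show via bridge burning and uniqueness of participants that the two parent edges cannot coexist with $\{i_{z}\}=S^{\prime}.Z|_{b}$. Your explicit dichotomy ($i_{z}\in D_{2}|_{i_{y}}$ forces $e_{3}^{(1)}=e_{3}^{(3)}$, while $i_{x}\in D_{1}|_{i_{y}}$ forces $e_{3}^{(1)}\neq e_{3}^{(3)}$ because the repeated $E_{3}$ class in $D_{1}$ burns the terminal item) is precisely the ``only one possible structure'' step that the paper delegates to \prettyref{fig:counterexample-for-completeness}, just spelled out in full.
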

\begin{proof}
See Appendix.
\end{proof}
The counterexample demonstrates that a \emph{d}-connection path captured
in an $\mathbf{AGG}_{\mathcal{M}}$ might not have a corresponding
\emph{d}-connection path in \emph{any} ground graph.
\begin{cor}
\label{cor:noncompleteness}The revised (as well as the original)
abstract ground graph for an RCM is not complete for relational d-separation.
\end{cor}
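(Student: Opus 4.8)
The plan is to prove \prettyref{cor:noncompleteness} by contraposition of the completeness criterion. Completeness asserts that $\left(\mathbf{U}\perp\mathbf{V}\mid\mathbf{W}\right)_{\mathcal{M}}\Rightarrow\left(\overline{\mathbf{U}}\Perp\overline{\mathbf{V}}\mid\overline{\mathbf{W}}\right)_{\mathbf{AGG}_{\mathcal{M}}}$, so to refute it I only need a single model $\mathcal{M}$ and CI query for which the AGG reports \emph{d-connection} while relational \emph{d}-separation nevertheless holds. The example of \prettyref{sub:COUNTEREXAMPLE} supplies precisely this witness, with $\mathbf{U}=\{P.X\}$, $\mathbf{V}=\{S^{\prime}.Z\}$, and $\mathbf{W}=\{Q.Y\}$ of common perspective $B=E_{1}$; the whole proof is then an assembly of the two preceding claims.

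First I would invoke Claim~1, which supplies the AGG side, $\left(\overline{P.X}\not\Perp\overline{S^{\prime}.Z}\mid\overline{Q.Y}\right)_{\mathbf{AGG}_{\mathcal{M}}}$. Next I would invoke Claim~2, which says no $\sigma\in\Sigma_{\mathcal{S}}$ and $b\in\sigma(B)$ realize $\left(P.X|_{b}\not\Perp S^{\prime}.Z|_{b}\mid Q.Y|_{b}\right)_{GG_{\mathcal{M}\sigma}}$; equivalently $\forall_{\sigma}\forall_{b}\left(P.X|_{b}\Perp S^{\prime}.Z|_{b}\mid Q.Y|_{b}\right)_{GG_{\mathcal{M}\sigma}}$, which is exactly the definition of relational \emph{d}-separation $\left(P.X\perp S^{\prime}.Z\mid Q.Y\right)_{\mathcal{M}}$. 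Thus the antecedent of the completeness implication holds while its consequent fails, so the revised $\mathbf{AGG}_{\mathcal{M}}$ is \emph{not} complete for relational \emph{d}-separation.

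To cover the original AGG as well, I would argue that the same witness transfers. The revision of \prettyref{sub:Revision-of-Abstract} only \emph{prunes} $\mathbf{IV}$ (\prettyref{lem:intersectability-nec-suf}) and $\mathbf{IVE}$ (\prettyref{def:IVE}, via co-intersectability), leaving $\mathbf{RV}$ and $\mathbf{RVE}$ untouched, so every vertex and edge of the revised AGG is already present in the original. The active path furnished by Claim~1 is the collider path $P.X\to Q.Y\leftarrow(S.Z\cap S^{\prime}.Z)$, built from one RVE and one IVE; the IVE persists in the original because the looser original IVE condition is satisfied here (there is an RVE $S.Z\to Q.Y$ through $D_{2}$, as in \prettyref{fig:co-intersectability}), and the only conditioned vertex on that path, the collider $Q.Y$, lies in $\overline{Q.Y}$ in both graphs. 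Since d-connection requires only one active path and enlarging $\mathbf{IV}$/$\mathbf{IVE}$ in the original merely adds vertices and edges, the exhibited path stays active, so the original AGG is incomplete too.

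I expect the genuine difficulty to reside entirely in Claims~1 and~2, which are deferred to the Appendix: Claim~1 requires tracing the active collider through the intersection variable $S.Z\cap S^{\prime}.Z$, and Claim~2 requires the global argument that, across \emph{every} skeleton, the bridge-burning semantics together with the $\mathsf{one}$ cardinalities force $i_{x}\to i_{y}\leftarrow i_{z}$ to be the only realizable substructure (\prettyref{fig:counterexample-for-completeness}), which is invariably blocked once $Q.Y$ is in the conditioning set. For the corollary proper the only subtlety I would verify is that the extra IVs admitted by the original definition keep $\overline{P.X}$, $\overline{S^{\prime}.Z}$, and $\overline{Q.Y}$ mutually disjoint and do not drop a non-collider of the exhibited path into the conditioning set; granting that, the corollary follows immediately from the two claims.
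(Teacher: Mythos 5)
Your proposal is correct and takes essentially the same approach as the paper: the corollary is just the assembly of the two claims of \prettyref{sub:COUNTEREXAMPLE} against the stated completeness criterion, with the witness $\mathbf{U}=\{P.X\}$, $\mathbf{V}=\{S^{\prime}.Z\}$, $\mathbf{W}=\{Q.Y\}$, Claim~2 giving $\left(P.X\perp S^{\prime}.Z\mid Q.Y\right)_{\mathcal{M}}$ by the all-ground-graphs definition and Claim~1 giving \emph{d}-connection in $\mathbf{AGG}_{\mathcal{M}}$. Your subgraph argument covering the original AGG is the same observation the paper disposes of in a footnote---the original definition grants the IVE $Q.Y\leftarrow S.Z\cap S^{\prime}.Z$ without any co-intersectability check---so the exhibited collider path is active under both definitions.
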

It is possible that an additional test can be utilized to check whether
there \emph{exists} such a ground graph that can represent a \emph{d}-connection
path captured in $\mathbf{AGG}_{\mathcal{M}}$. However, the efficiency
of such an additional test is unknown and designing such a test is
beyond the scope of this paper.

\subsection{\label{sub:RELATING-NON-COMPLETENESS-WITH}RELATING NON-COMPLETENESS
WITH FAITHFULNESS}

In light of the preceding result that AGG is not complete for relational
\emph{d}-separation, we proceed to examine the relationship between
an RCM and its lifted representation in terms of the sets of conditional
independence relationships that they admit. In RCM, there are several
levels of relationship regarding the sets of conditional independence:
between the underlying probability distributions and the ground graphs,
between the ground graphs of an RCM and the RCM, and between the RCM
and its AGG: 
\[
\{p\leftrightarrow GG_{\mathcal{M}_{\Theta}\sigma}\}_{\sigma\in\Sigma_{\mathcal{S}}}\leftrightarrow\mathcal{M}_{\Theta}\leftrightarrow\mathbf{AGG}_{\mathcal{M}}
\]
In RCM, the \emph{causal Markov condition} and \emph{causal faithfulness
condition} (see below) can be applied between a ground graph $GG_{\mathcal{M}\sigma}$
and its underlying probability distribution $p$. Both conditions
are assumed for learning an RCM from relational data. Relational \emph{d}-separation
requires a set of conditional independence of $\mathcal{M}_{\Theta}$
using those deduced from every ground graph $GG_{\mathcal{M}_{\Theta}\sigma}$
for every $\sigma\in\Sigma_{\mathcal{S}}$. In light of the lack of
completeness of AGG for relational \emph{d}-separation, the set of
conditional independence relations admitted by $\mathcal{M}_{\Theta}$
and its lifted representation $\mathbf{AGG}_{\mathcal{M}}$ are \emph{not}
necessarily equivalent (see \prettyref{cor:noncompleteness}).

We will relate $\mathcal{M}$ and $\mathbf{AGG}_{\mathcal{M}}$ using
an \emph{analogy} of \emph{causal Markov condition} and \emph{faithfulness}
\citep{spirtes2000causation,Ramsey2006} interpreting $\mathbf{AGG}_{\mathcal{M}}$
and $\mathcal{M}$ as a DAG $G$ and a distribution $p$, respectively.
We first recapitulate the definitions for causal Markov condition
and faithfulness.
\begin{defn}[Causal Markov Condition \citep{Ramsey2006}]
Given a set of variables whose causal structure can be represented
by a DAG $G$, every variable is probabilistically independent of
its non-effects (non-descendants in $G$) conditional on its direct
causes (parents in $G$).
\end{defn}
The causal Markov condition (i.e., local Markov condition) is not
directly translated into the relationship between $\mathbf{AGG}_{\mathcal{M}}$
and $\mathcal{M}$ since they refer to different variables. However,
the soundness of $\mathbf{AGG}_{\mathcal{M}}$ for relational \emph{d}-separation
of $\mathcal{M}$ (i.e., global Markov condition) would be sufficient
to interpret causal Markov condition between $\mathbf{AGG}_{\mathcal{M}}$
and $\mathcal{M}$. That is,
\[
\forall_{U,V,W\in\mathbf{RV}}\left(\bar{U}\Perp\bar{V}\mid\bar{W}\right)_{\mathbf{AGG}_{\mathcal{M}}}\Rightarrow\left(U\perp V\mid W\right)_{\mathcal{M}}
\]
where $U$, $V$, and $W$ are distinct relational variables sharing
a common perspective.

\begin{defn}[Causal Faithfulness Condition \citep{Ramsey2006}]
Given a set of variables whose causal structure can be represented
by a DAG, no conditional independence holds unless entailed by the
causal Markov condition.
\end{defn}
By the counterexample above, $\mathcal{M}$ is not strictly \emph{faithful}
to $\mathbf{AGG}_{\mathcal{M}}$, because more conditional independences
hold in $\mathcal{M}$ than those entailed by $\mathbf{AGG}_{\mathcal{M}}$.

\subsubsection{Weaker Faithfulness Conditions}

\citet{Ramsey2006} showed that the two weaker types of faithfulness
-- \emph{adjacency-faithfulness} and \emph{orientation-faithfulness}
-- are sufficient to retrieve a maximally-oriented causal structure
from a data under the causal Markov condition. What we have showed
is that there are more conditional independence hold in $\mathcal{M}$
than those entailed by its corresponding $\mathbf{AGG}_{\mathcal{M}}$.
However, the two weaker faithfulness conditions hold true (if they
are appropriately interpreted in an RCM and its lifted representation).

\paragraph{Adjacency-Faithfulness}
\begin{defn}[Adjacency-Faithfulness \citep{Ramsey2006}]
Given a set of variables $\mathbf{V}$ whose causal structure can
be represented by a DAG $G$, if two variables $X$, $Y$ are adjacent
in $G$, then they are dependent conditional on any subset of $\mathbf{V}\setminus\left\{ X,Y\right\} $.
\end{defn}
Let $U$, $V$ be two distinct relational variables of the same perspective
$B$. We limit $U$ and $V$ to be non-intersectable to each other.
Otherwise, they must not be adjacent to each other by the definition
of RCM since an edge between intersectable relational variables yields
a feedback in a ground graph. If there is an edge $U\rightarrow V$
in $\mathbf{AGG}_{\mathcal{M}}$, 
\[
\forall_{\mathbf{W}\subseteq\mathbf{RV}_{B}\setminus\left\{ U,V\right\} }\left(U\not\perp V\mid\mathbf{W}\right)_{\mathcal{M}}
\]
We can construct a skeleton $\sigma\in\Sigma_{\mathcal{S}}$ where
its corresponding ground graph $GG_{\mathcal{M}\sigma}$ satisfies
that $U|_{b}$ and $V|_{b}$ are singletons and $U|_{b}\cup V|_{b}$
are disjoint to $\left(\mathbf{RV}_{B}\setminus\left\{ U,V\right\} \right)|_{b}$
for $b\in\sigma\left(B\right)$. Lemma 4.4.1 by \citet{maier2014thesis}
describes a method to construct a \emph{minimal} skeleton to represent
$U$ and $V$ with a single $b\in\sigma\left(B\right)$. It guarantees
that $U|_{b}$ and $V|_{b}$ are singletons and every relational variable
$W\in\mathbf{RV}_{B}\setminus\left\{ U,V\right\} $ satisfies $W|_{b}\cap U|_{b}=\emptyset$
and $W|_{b}\cap V|_{b}=\emptyset$.

\paragraph{Orientation-Faithfulness}
\begin{defn}[Orientation-Faithfulness \citep{Ramsey2006}]
Given a set of variables $\mathbf{V}$ whose causal structure can
be represented by a DAG $G$, let $\left\langle X,Y,Z\right\rangle $
be any unshielded triple in $G$. 
\begin{itemize}[label=(O1)]
\item if $X\rightarrow Y\leftarrow Z$, then $X$ and $Z$ are dependent
given any subset of $\mathbf{V}\setminus\left\{ X,Z\right\} $ that
contains $Y$; 
\end{itemize}

\begin{itemize}[label=(O2)]
\item otherwise, $X$ and $Z$ are dependent conditional on any subset
of $\mathbf{V}\setminus\left\{ X,Z\right\} $ that does not contain
$Y$.
\end{itemize}
\end{defn}
Let $U$, $V$, and $W$ be three distinct relational variables of
the same perspective $B$ forming an unshielded triple in $\mathbf{AGG}_{\mathcal{M}}$.
Similarly, $V$ is not intersectable to both $U$ and $W$. The condition
(O1) can be written as 
\[
\forall_{\mathbf{T}\subseteq\mathbf{RV}_{B}\setminus\left\{ U,W\right\} }(U\!\not\perp\!W\mid\mathbf{T}\cup\{V\})_{\mathcal{M}}
\]
if edges are oriented as $U\rightarrow V\leftarrow W$ in $\mathbf{AGG}_{\mathcal{M}}$.
Otherwise, 
\[
\forall_{\mathbf{T}\subseteq\mathbf{RV}_{B}\setminus\left\{ U,W\right\} }(U\!\not\perp\!W\mid\mathbf{T}\setminus\{V\})_{\mathcal{M}}
\]
for the condition (O2). Again, constructing a minimal skeleton for
$U$, $V$, and $W$ guarantees that no $T\in\mathbf{RV}_{B}\setminus\left\{ U,V,W\right\} $
can represent any item in $\left\{ U|_{b},V|_{b},W|_{b}\right\} $.
Thus, the existence of $V$ in the conditional determines (in)dependence
in the ground graph induced from the minimal skeleton.

\paragraph{Learning RCM with Non-complete AGG}

RCD (Relational Causal Discovery, \citet{maier2013rcd}) is an algorithm
for learning the structure of an RCM from relational data. In learning
RCM, AGG plays a key role: AGG is constructed using CI tests to obtain
the relational dependencies of an RCM. The lack of completeness of
AGG for relational \emph{d}-separation in RCM raises questions about
the correctness of RCD. A careful examination of AGG through the lens
of faithfulness suggests that \emph{adjacency-faithful} and \emph{orientation-faithful}
conditions can be applied to $\mathbf{AGG}_{\mathcal{M}}$ to recover
correct partially-oriented dependencies for an RCM. However, it is
still unclear whether RCD recovers maximally-oriented dependencies
with the acyclicity of AGG (i.e., relational variables) not the acyclicity
of RCM (i.e., attribute classes). This raises the possibility of an
algorithm for learning the structure of an RCM from relational data
that does not require the intermediate step of constructing a lifted
representation.

\section{CONCLUDING REMARKS}

There is a growing interest in relational causal models \citep{maier2010rpc,maier2013rcd,maier2013rds-workshop,maier2014thesis,arbour2014psm,marazopoulou2015trcm}.
A lifted representation, called \emph{abstract ground graph} (AGG),
plays a central role in reasoning with and learning of RCM. The correctness
of the algorithm proposed by \citet{maier2013rcd} for learning RCM
from data relies on the \emph{soundness and completeness} of AGG for
\emph{relational d-separation} to reduce the learning of an RCM to
learning of an AGG. We showed that AGG, as defined in \citep{maier2013rcd},
does \emph{not} correctly abstract all ground graphs. We revised the
definition of AGG to ensure that it correctly abstracts all ground
graphs. We further showed that AGG representation is \emph{not complete}
for relational \emph{d}-separation\emph{,} that is, there can exist
conditional independence relations in an RCM that are not entailed
by AGG. Our examination of the relationship between the lack of completeness
of AGG for relational \emph{d}-separation and \emph{faithfulness}
suggests that weaker notions of completeness, namely \emph{adjacency
faithfulness }and \emph{orientation faithfulness} between an RCM and
its AGG can be used to learn an RCM from data. Work in progress is
aimed at: 1) identifying the necessary and sufficient criteria for
guaranteeing the completeness of AGG for relational \emph{d}-separation;
2) establishing whether the RCD algorithm outputs a maximally-oriented
RCM even when the completeness of AGG for relational \emph{d}-separation
does not hold; and 3) devising a structure learning algorithm that
does not rely on a lifted representation.

\section*{APPENDIX}

We first prove \prettyref{lem:intersectability-nec-suf} in \prettyref{sub:Intersectability-and-IV}.
\begin{lem*}
Given a relational schema $\mathcal{S}$, let $P$ and $Q$ be two
different relational paths satisfying the (necessary) criteria of
\citet{maier2014thesis} and $\left|Q\right|\leq\left|P\right|$.
Let $m$ and $n$ be $\mathsf{LLRSP}(P,Q)$ and $\mathsf{LLRSP}(\tilde{P},\tilde{Q})$,
respectively. Then, $P$ and $Q$ are intersectable if and only if
$m+n\leq\left|Q\right|$.\end{lem*}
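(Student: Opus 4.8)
The plan is to prove both directions by reasoning about the items that any traversal realizing a common terminal item is \emph{forced} to visit. Write $q=|Q|$ and $p=|P|$, so $q\le p$. Recall that $m=\mathsf{LLRSP}(P,Q)$ means $P^{1:m}=Q^{1:m}$ and that in every skeleton the prefix terminal set $P^{1:\ell}|_b$ is a singleton for $\ell\le m$; likewise $n=\mathsf{LLRSP}(\tilde P,\tilde Q)$ means $P$ and $Q$ share their last $n$ item classes and that, from any terminal item, the reverse traversal is forced to a singleton for its first $n$ steps. Both are schema-level consequences of the cardinalities (a relationship-item has a unique neighbour in each participating entity class, and an entity-item has a unique relationship-neighbour exactly when $\mathsf{card}=\mathsf{one}$), so they hold in any skeleton I consider. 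Fixing $\sigma$ and $b\in\sigma(B)$ with $t\in P|_b\cap Q|_b$, the forced prefix gives unique level-$i$ items $z_i$ ($i\le m$) shared by both traversals, and the forced suffix gives a unique predecessor chain $w_1=t,w_2,\dots,w_n$ (with $w_k$ of class $P_{p-k+1}=Q_{q-k+1}$), again shared by both.

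For necessity I would prove the contrapositive: if $m+n>q$, no such $t$ exists. The front-forced items occupy $Q$-positions $1,\dots,m$ and the back-forced items occupy $Q$-positions $q-n+1,\dots,q$; these ranges overlap precisely when $m\ge q-n+1$, i.e. when $m+n>q$. Take $i$ in the overlap. Since $Q$-position $i\le m$ is a forced singleton, the back-forced item there coincides with $z_i$, giving $z_i=w_{q-i+1}$. In $P$ this single item is forced to position $i$ (as $z_i$) and to position $p-q+i$ (as $w_{q-i+1}$). If $p>q$ these differ, and under BBS this is impossible: once $z_i$ is consumed at the earlier level $i$ it is removed from all later terminal sets, so the forced predecessor chain — the only route into $t$ — is broken at level $p-q+i$ and $t\notin P|_b$, a contradiction. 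If $p=q$, the overlapping forced ranges cover all of $1,\dots,q$, forcing $P_i=Q_i$ for every $i$ and hence $P=Q$, contradicting distinctness. Either way $P,Q$ are not intersectable.

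For sufficiency, assume $m+n\le q$ and construct a witnessing skeleton. Then in $Q$ the front block (positions $1,\dots,m$) and back block (positions $q-n+1,\dots,q$) are position-disjoint, leaving a middle of length $q-m-n\ge 0$, and the middle of $P$ has length $p-m-n\ge q-m-n\ge 0$. I would instantiate a minimal skeleton consisting of a shared forced prefix $z_1,\dots,z_m$, two \emph{item-disjoint} middle segments realizing the differing portions of $P$ and $Q$, and a shared forced suffix $w_n,\dots,w_2,t$. The crucial point is that the very reason $\mathsf{LLRSP}$ stopped at $m$ (resp.\ $n$) — either a class divergence at the next step, or a $\mathsf{many}$-cardinality — is exactly what licenses the branch just after $z_m$ and the merge just before $w_n$ without violating path-validity rules 1--3 or the cardinalities. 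Keeping the two middles disjoint guarantees BBS never burns an item needed by the other traversal, so $t\in P|_b$ and $t\in Q|_b$, witnessing $P|_b\cap Q|_b\neq\emptyset$.

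The main obstacle is the sufficiency construction: one must check that this branch-and-merge skeleton is a legal instantiation of $\mathcal{S}$ in every case of why $\mathsf{LLRSP}$ terminated (class divergence versus $\mathsf{many}$-cardinality, and their interaction at front and back simultaneously, including the degenerate empty-middle case $m+n=q$, which forces $P=Q$ only when $p=q$ and is therefore vacuous under the hypotheses), and that BBS actually delivers $t$ into both terminal sets rather than pruning it. A secondary subtlety, flagged in the necessity argument, is justifying that the forced suffix chain is the \emph{only} route by which $t$ can enter $P|_b$, so that burning any link genuinely removes $t$; this follows from the uniqueness of each reverse step and should be recorded as a short induction on the suffix length.
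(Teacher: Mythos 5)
Your proposal is correct and takes essentially the same route as the paper's proof: for necessity, the paper also exploits the overlap of the forced prefix and suffix chains when $m+n>\left|Q\right|$ to show that one item would have to occupy two distinct positions ($m$ and $\left|P\right|-(\left|Q\right|-m)$) of the $P$-traversal when $\left|P\right|>\left|Q\right|$, contradicting the BBS disjointness of prefix terminal sets, and it disposes of $\left|P\right|=\left|Q\right|$ by showing it forces $P=Q$; for sufficiency, it uses the same construction of a minimal skeleton with shared forced front and back and fresh, item-disjoint middle items (completed via Lemma 3.4.1 of Maier's thesis). The one obstacle you flag, the empty-middle case $m+n=\left|Q\right|$, is resolved in the paper precisely along the lines you suggest: $P\neq Q$ together with the alternating-sequence structure gives $\left|P\right|\geq\left|Q\right|+2$, so $P^{m+1:\left|P\right|-n}$ still supplies at least two fresh items for the branch-and-merge.
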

\begin{proof}
(If part) If $m+n\leq\left|Q\right|$, then we can construct a skeleton
$\sigma$ such that $P|_{b}\cap Q|_{b}\neq\emptyset$ for some $b\in\sigma(P_{1})$
by adding unique items for $Q$ and for $P^{m+1:\left|P\right|-n}$
and complete the skeleton in the same manner as shown in Lemma 3.4.1
\citep{maier2014thesis}. Note that if $m+n=\left|Q\right|$, then
$\left|P\right|\geq\left|Q\right|+2$ since $P\neq Q$ and a relational
path is an alternating sequence. This guarantees that there are at
least two items for $P^{m+1:\left|P\right|-n}$.

(Only if part) Let $c$ be in $P|_{b}\cap Q|_{b}$ for some arbitrary
skeleton $\sigma\in\Sigma_{\mathcal{S}}$ and $b\in\sigma(P_{1})$.
Then, there should be two lists of items corresponding to $P$ and
$Q$ sharing the first $m$ and the last $n$. The condition $m+n>\left|Q\right|$
implies $Q|_{b}$ is a singleton set. We define
\[
\mathbf{p}=\langle p_{1},\dotsc,p_{m},p_{\left|P\right|-n+1},\dotsc,p_{\left|P\right|}\rangle
\]
and
\[
\mathbf{q}=\langle q_{1},\dotsc,q_{\left|Q\right|}\rangle,
\]
where $\{q_{\ell}\}=Q^{1:\ell}|_{b}$ and $\{p_{\ell}\}=P^{1:\ell}|_{b}$
for $1\leq\ell\leq m$, and $p_{\left|P\right|-l+1}\in\tilde{P}^{1:l}|_{c}$
for $1\leq l\leq n$. We can see that $p_{1}=q_{1}=b$ and $p_{\left|P\right|}=q_{\left|Q\right|}=c$.
Moreover, 
\[
p_{m}=q_{m}=q_{\left|Q\right|-(\left|Q\right|-m)}=p_{\left|P\right|-\left(\left|Q\right|-m\right)}
\]
by the definition of $\mathsf{LLRSP}$. If $\left|Q\right|<\left|P\right|$,
then $m\neq\left|P\right|-\left|Q\right|+m$ and $m$th item for $P$
is repeated at $\left|P\right|-(\left|Q\right|-m)$th, which violates
the BBS. Otherwise, it is not the case, since $\left|P\right|=\left|Q\right|$
implies $\mathbf{p}=\mathbf{q}$ and, hence, $P=Q$ by the definition
of $\mathsf{LLRSP}$, which contradicts the assumption that $P$ and
$Q$ are different relational paths.
\end{proof}
We provide proofs for two claims regarding the counterexample in \prettyref{sub:COUNTEREXAMPLE}.

\medskip{}

\begin{claim*}
$\left(\overline{P.X}\not\Perp\overline{S^{\prime}.Z}\mid\overline{Q.Y}\right)_{\mathbf{AGG}_{\mathcal{M}}}$.\end{claim*}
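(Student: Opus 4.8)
The plan is to establish a d-connecting path between the IV/RV-augmented sets $\overline{P.X}$ and $\overline{S'.Z}$ in $\mathbf{AGG}_{\mathcal{M}}$ conditioned on $\overline{Q.Y}$. First I would identify the relevant vertices of the AGG. The relational variable $Q.Y$ is a collider candidate because it has two incoming dependencies, $D_1.X\!\rightarrow\![I_Y].Y$ and $D_2.Z\!\rightarrow\![I_Y].Y$, so I expect edges into $Q.Y$ arising from extending $Q$ by $D_1$ and by $D_2$. Concretely, I would compute $Q\!\Join\!D_1$ and $Q\!\Join\!D_2$ and check that $P\in Q\!\Join\!D_1$ and $S\in Q\!\Join\!D_2$, which would produce RVEs $P.X\!\rightarrow\!Q.Y$ and $S.Z\!\rightarrow\!Q.Y$. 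This realizes a collider $P.X\!\rightarrow\!Q.Y\!\leftarrow\!S.Z$ at $Q.Y$.

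Next I would exploit the intersection structure. The path $S'$ is constructed to be intersectable with $S$ (both terminate at $E_5$ with attribute $Z$ from perspective $E_1$, and neither is a prefix of the other), so there is an intersection variable $S.Z\cap S'.Z\in\mathbf{IV}$. The key point, justified by the revised \prettyref{def:IVE} together with the co-intersectability of $\langle Q,D_2,S,S'\rangle$ illustrated in \prettyref{fig:co-intersectability}, is that the IV $S.Z\cap S'.Z$ inherits the edge into the collider, giving an IVE $S.Z\cap S'.Z\!\rightarrow\!Q.Y$. Since $\overline{S'.Z}=\{S'.Z\}\cup\{S'.Z\cap T\in\mathbf{IV}\}$ by definition includes this intersection variable, the vertex $S.Z\cap S'.Z$ lies in the augmented set $\overline{S'.Z}$. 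I would then trace the path $P.X\!\rightarrow\!Q.Y\!\leftarrow\!(S.Z\cap S'.Z)$ as a candidate d-connecting path, with the collider $Q.Y$ being opened precisely because $\overline{Q.Y}$ contains $Q.Y$ itself.

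The final step is to verify that this path is genuinely d-connecting in the traditional sense on $\mathbf{AGG}_{\mathcal{M}}$: the single intermediate vertex $Q.Y$ is a collider, it (or a descendant of it) belongs to the conditioning set $\overline{Q.Y}$, and the two endpoint vertices $P.X$ and $S.Z\cap S'.Z$ are not themselves blocked by membership in $\overline{Q.Y}$. I would argue that $P.X$ and the IV $S.Z\cap S'.Z$ are distinct from $Q.Y$ and its augmentation, so conditioning on $\overline{Q.Y}$ neither removes an endpoint nor blocks the chain except by opening the collider. Hence $\overline{P.X}$ and $\overline{S'.Z}$ are d-connected given $\overline{Q.Y}$, which is the claim.

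The main obstacle I anticipate is the bookkeeping around the \emph{extend}/join computation: I must confirm the correct pivots so that $P$ genuinely lies in $Q\!\Join\!D_1$ and $S$ in $Q\!\Join\!D_2$ (checking validity against the three rules of a relational path, especially rule 3 on repeated relationship classes with cardinality $\mathsf{one}$), and I must carefully confirm co-intersectability of $\langle Q,D_2,S,S'\rangle$ so that the crucial IVE into the collider actually exists under the revised definition. These combinatorial path computations are where an error would break the d-connection argument, so that is where I would concentrate the verification.
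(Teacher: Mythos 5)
Your proposal is correct and follows essentially the same route as the paper's own proof: the RVEs $P.X\!\rightarrow\!Q.Y$ and $S.Z\!\rightarrow\!Q.Y$ (the paper confirms your joins with pivots $P=Q\Join_{6}D_{1}$ and $S\in Q\Join_{4}D_{2}$), the IVE $S.Z\cap S^{\prime}.Z\!\rightarrow\!Q.Y$ justified by intersectability plus co-intersectability of $\langle Q,D_{2},S,S^{\prime}\rangle$ under the revised \prettyref{def:IVE}, membership of the IV in $\overline{S^{\prime}.Z}$, and the collider at $Q.Y$ opened by the conditioning set. Your closing observation that passing from $Q.Y$ to $\overline{Q.Y}$ cannot block the path matches the paper's argument (which notes all vertices of $\overline{Q.Y}$ have only incoming edges), so the proposal is sound as it stands.
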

\begin{proof}
By the definition of RVE, there are RVEs $P.X\rightarrow Q.Y$ and
$Q.Y\leftarrow S.Z$ in $\mathbf{AGG}_{\mathcal{M}}$ since $P=Q\Join_{6}D_{1}$
and $S\in Q\Join_{4}D_{2}$. Moreover, there is an IVE $Q.Y\leftarrow S.Z\cap S^{\prime}.Z$
in $\mathbf{AGG}_{\mathcal{M}}$ since 1) $S$ and $S^{\prime}$ are
\emph{intersectable, }2) there is an RVE $Q.Y\leftarrow S.Z$, and
3) $\left\langle Q,D_{2},S,S^{\prime}\right\rangle $ is \emph{co-intersectable}
(see \prettyref{fig:co-intersectability}).\footnote{Note that the original definition of $\mathbf{AGG}_{\mathcal{M}}$
does not check \emph{co-intersectability} and $Q.Y\leftarrow S.Z\cap S^{\prime}.Z$
is granted.} Since $P.X\rightarrow Q.Y\leftarrow S.Z\cap S^{\prime}.Z$ and $S.Z\cap S^{\prime}.Z\in\overline{S^{\prime}.Z}$,
we derive $\left(P.X\not\Perp\overline{S^{\prime}.Z}\mid Q.Y\right)_{\mathbf{AGG}_{\mathcal{M}}}$,
which implies $\left(\overline{P.X}\not\Perp\overline{S^{\prime}.Z}\mid Q.Y\right)_{\mathbf{AGG}_{\mathcal{M}}}$.
Furthermore, conditioning on $\overline{Q.Y}$, compared to $Q.Y$,
does not block any possible \emph{d}-connection paths between $\overline{P.X}$
to $\overline{S^{\prime}.Z}$ since there are only incoming edges
to $\overline{Q.Y}$. Finally, $\left(\overline{P.X}\not\Perp\overline{S^{\prime}.Z}\mid\overline{Q.Y}\right)_{\mathbf{AGG}_{\mathcal{M}}}$
holds.\end{proof}
\begin{claim*}
There is no $\sigma\in\Sigma_{\mathcal{S}}$ and $b\in\sigma\left(B\right)$
such that 
\[
\left(P.X|_{b}\not\Perp S^{\prime}.Z|_{b}\mid Q.Y|_{b}\right)_{GG_{\mathcal{M}\sigma}}.
\]
\end{claim*}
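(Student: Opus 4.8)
The plan is to exploit two structural features of this model: that $X$ and $Z$ are root attributes (they occur only as causes) while $Y$ is a sink attribute (it occurs only as an effect), and that every cardinality is $\mathsf{one}$, so every terminal set is a singleton (or empty). First I would observe that in any $GG_{\mathcal{M}\sigma}$ every edge has the form $\cdot.X\!\rightarrow\!\cdot.Y$ or $\cdot.Z\!\rightarrow\!\cdot.Y$, so that every $Y$-vertex is a sink and every $X$- and $Z$-vertex is a source. Consequently the only path that can $d$-connect a vertex of $P.X|_{b}$ to a vertex of $S'.Z|_{b}$ is a single collider $i_{x}.X\!\rightarrow\!i_{y}.Y\!\leftarrow\!i_{z}.Z$, and it is active only when the collider $i_{y}$ lies in the conditioning set. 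This reduces the claim to showing that no skeleton $\sigma\in\Sigma_{\mathcal{S}}$ and base $b$ admit items $i_{x}\in P|_{b}$, $i_{y}\in Q|_{b}$, and $i_{z}\in S'|_{b}$ together with ground-graph edges $i_{x}.X\!\rightarrow\!i_{y}.Y$ (instantiating $D_{1}$) and $i_{z}.Z\!\rightarrow\!i_{y}.Y$ (instantiating $D_{2}$).

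Next I would pin down the identifications forced by the cardinality-one constraints. The edge $i_{z}.Z\!\rightarrow\!i_{y}.Y$ forces $i_{z}$ to be the unique element of $D_{2}|_{i_{y}}$, namely the $E_{5}$-item $e_{5}$ sitting in the $R_{3}$-instance $r_{3}$ that $Q$ reaches from $b$. Requiring in addition that this same $i_{z}=e_{5}$ belong to $S'|_{b}$ forces the $R_{3}$-instance that $S'$ traverses through the $E_{3}$-item on $P$ to coincide with $r_{3}$, which by $\mathsf{card}(R_{3},E_{3})=\mathsf{one}$ forces the $E_{3}$-item on $P$ to equal the $E_{3}$-item $e_{3}'$ that $Q$ visits immediately after $r_{3}$. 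This is exactly the co-intersectability of $\langle Q,D_{2},S,S'\rangle$ realized at $i_{y}$ (cf. \prettyref{fig:co-intersectability}).

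The crux is then to show that this very identification destroys the other edge. Once the $E_{3}$-item $e_{3}'$ is shared between $P$ and $Q$, the constraint $\mathsf{card}(R_{2},E_{3})=\mathsf{one}$ forces the $R_{2}$-instance on $P$ to coincide with the $R_{2}$-instance $r_{2}'$ adjacent to $i_{y}$ on $Q$, and hence forces the $E_{2}$-item on $P$ to equal $i_{y}$ itself. I would then trace $D_{2}\cdot$-free the walk $D_{1}|_{i_{y}}$ step by step: starting at $i_{y}$ it passes through $r_{2}'$, $e_{3}'$, $r_{3}$, $e_{4}$, $r_{1}$ and then reaches an $E_{2}$-item which is again $i_{y}$, a repetition forbidden by the bridge-burning semantics, so $D_{1}|_{i_{y}}=\emptyset$ and the edge $i_{x}.X\!\rightarrow\!i_{y}.Y$ cannot exist at this $i_{y}$. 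Running the contrapositive (if the $D_{1}$-edge does exist, then the $E_{3}$-item on $P$ differs from $e_{3}'$, whence $S'|_{b}\cap D_{2}|_{i_{y}}=\emptyset$ and no $D_{2}$-edge into $i_{y}$ originates from an $S'$-item) shows that the two incoming edges can never coexist at a single $Y$-vertex (cf. \prettyref{fig:counterexample-for-completeness}).

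I expect the main obstacle to be the bookkeeping in this last step: justifying rigorously, under the bridge-burning semantics and the cardinality-one constraints, that the identifications required to place $i_{z}$ in both $S|_{b}$ and $S'|_{b}$ are precisely those that make the $D_{1}$-traversal revisit $i_{y}$. This is the concrete manifestation of the interdependency between intersection variables and relational-variable edges captured by co-intersectability, and proving it from the definitions (rather than merely reading it off the figure) is where the real work lies.
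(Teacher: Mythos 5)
Your proposal is correct and takes essentially the same route as the paper's own proof: both use the source/sink structure of the attributes and the all-$\mathsf{one}$ cardinalities to reduce the question to realizing the single collider $i_{x}.X\rightarrow i_{y}.Y\leftarrow i_{z}.Z$ with $i_{x}\in P|_{b}$, $i_{y}\in Q|_{b}$, $i_{z}\in S^{\prime}|_{b}$, and then derive a contradiction from the bridge burning semantics together with the cardinality constraints. Your explicit traversal showing $D_{1}|_{i_{y}}=\emptyset$ once $i_{z}\in S^{\prime}|_{b}\cap D_{2}|_{i_{y}}$ is assumed is simply the worked-out, contrapositive form of the paper's appeal to the unique admissible structure in \prettyref{fig:counterexample-for-completeness}.
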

\begin{proof}
Suppose that there exist such a skeleton $\sigma$ and base $b\in\sigma\left(B\right)$
satisfying $\left(P.X|_{b}\not\Perp S^{\prime}.Z|_{b}\mid Q.Y|_{b}\right)_{GG_{\mathcal{M}\sigma}}$.
Every terminal set for $P$, $Q$, and $S^{\prime}$ given the base
must not be empty because of the definition of \emph{d}-separation
and the fact that attribute classes $X$ and $Z$ are connected only
through $Y$ (i.e., $Y$ is a collider). Since every cardinality is
$\mathsf{one}$, terminal sets must be singletons. Let $\left\{ i_{x}\right\} =P.X|_{b}$,
$\left\{ i_{y}\right\} =Q.Y|_{b}$, and $\left\{ i_{z}\right\} =S^{\prime}.Y|_{b}$.
Furthermore, since $i_{x}$ and $i_{z}$ must be \emph{d}-connected
given $i_{y}$, $GG_{\mathcal{M}\sigma}$ must have two edges $i_{x}\rightarrow i_{y}\leftarrow i_{z}$,
which requires $i_{x}\in D_{1}|_{i_{y}}$ and $i_{z}\in D_{2}|_{i_{y}}$.
However, due to BBS and cardinality constraints (i.e., $\mathsf{one}$),
there exists only one possible structure (see \prettyref{fig:counterexample-for-completeness})
where $i_{x}$ and $i_{z}$ are the cause of $i_{y}$ while satisfying
all previously mentioned conditions except $\left\{ i_{z}\right\} =S^{\prime}.Y|_{b}$.
In other words, the constraint $\left\{ i_{z}\right\} =S^{\prime}.Y|_{b}$
violates with the set of the rest of conditions. Hence, there exists
no such skeleton and base.
\end{proof}

\subsubsection*{Acknowledgments}

This research was supported in part by the Edward Frymoyer Endowed
Professorship held by Vasant Honavar and in part by the Center for
Big Data Analytics and Discovery Informatics which is co-sponsored
by the Institute for Cyberscience, the Huck Institutes of the Life
Sciences, and the Social Science Research Institute at the Pennsylvania
State University.

\bibliographystyle{apalike}
\bibliography{rcm_2hm}

\end{document}